\documentclass{article} % For LaTeX2e
\usepackage{collas2026_conference,times}
\usepackage{amsmath,amsfonts,bm}

\def\eqref#1{equation~\ref{#1}}
\def\1{\bm{1}}

\DeclareMathAlphabet{\mathsfit}{\encodingdefault}{\sfdefault}{m}{sl}
\SetMathAlphabet{\mathsfit}{bold}{\encodingdefault}{\sfdefault}{bx}{n}

\usepackage{graphicx}
\usepackage{subcaption}
\usepackage[utf8]{inputenc}
\usepackage[T1]{fontenc}
\usepackage{booktabs}
\usepackage{multirow}
\usepackage[table]{xcolor}
\usepackage{amsmath}
\usepackage{amsthm}
\newtheorem{theorem}{Theorem}

\usepackage{xcolor}
\usepackage{algorithm}
\usepackage{algpseudocode}
\usepackage{float}
\usepackage{hyperref}
\hypersetup{
    colorlinks=true,
    linkcolor=red,
    filecolor=magenta,
    urlcolor=blue,
    citecolor=purple,
    pdftitle={CP-MoE: Consistency-Preserving Mixture-of-Experts for Continual Learning},
    }

\algrenewcommand\alglinenumber[1]{{#1}}

\title{CP-MoE: Consistency-Preserving Mixture-of-Experts for Continual Learning}

\author{Antiquus S.~Hippocampus, Natalia Cerebro  \thanks{ Use footnote for providing further information
about author (webpage, alternative address)---\emph{not} for acknowledging
funding agencies.  Funding acknowledgements go at the end of the paper.} \\
Department of Computer Science\\
Random University\\
Country \\
\texttt{\{hippo,brain\}@cs.random.edu} \\
\And % Use And to have authors side by side
Koala Learnus \& D. Q. ResNet  \\
Department of Computational Neuroscience \\
University of Random City \\
Another Country \\
\texttt{\{koala,net\}@random.rand} \\
\AND % Use AND to have authors block one under the other
Coauthor \\
Affiliation \\
Address \\
\texttt{email}
}

\collasfinalcopy % Uncomment for camera-ready version, but NOT for submission.

\author{%
  Yang Liu \\
  UNSW Sydney, Australia \\
  \texttt{yang.liu39@student.} \\
  \texttt{unsw.edu.au} \\
  \And
  Toan Nguyen \\
  UNSW Sydney, Australia \\
  \texttt{toan.nguyen@unsw.edu.au} \\
  \And
  Flora D. Salim \thanks{Corresponding Author.} \\
  UNSW Sydney, Australia \\
  \texttt{flora.salim@unsw.edu.au} \\
}
\begin{document}

\maketitle

\begin{abstract}

% Continual learning remains a Central challenge for Large Language Models (LLMs) due to catastrophic forgetting. Recently, Mixture-of-Experts (MoE) architectures have emerged as a promising solution, leveraging their partitioned parameter space to mitigate parameter interference through sparse activation. However, in the context of LoRA-based MoE, existing approaches typically suffer from either excessive isolation that restricts knowledge transfer or a lack of effective parameter protection. 

Catastrophic forgetting remains a major obstacle to continual learning in large language models (LLMs) and vision--language models (VLMs). Although Mixture-of-Experts (MoE) architectures offer an efficient path to scaling, existing LoRA-based MoE continual learning methods still face a fundamental trade-off: they either isolate experts too aggressively, limiting knowledge transfer across tasks, or allow task-specific updates to overwrite important existing parameters, leading to severe forgetting. To address this, we propose \textit{CP-MoE}, a continual learning framework built around a transient expert that captures early task-specific updates and guides their integration into stable experts. CP-MoE introduces a consistency-preserving routing bias, which uses the transient expert to estimate representation similarity with stable experts and steer routing towards more compatible expert selection, and a transient expert-guided regularisation mechanism, which selectively protects important historical parameters during merging. Together, these components reduce parameter interference and forgetting while preserving cross-task knowledge transfer. We validate CP-MoE on both unimodal and multimodal continual learning benchmarks with LLM-based and VLM-based MoE models. On SuperNI benchmark, spanning diverse sequential language tasks, CP-MoE achieves state-of-the-art performance and stronger zero-shot transfer to unseen tasks. On VQA v2 dataset, it scales effectively to multimodal visual reasoning, consistently reduces forgetting, and outperforms strong MoE baselines. Our source code is available at \url{https://github.com/YangLiu-Lewis/CP-MoE}.

% To address these issues, we propose a framework centered on a \textbf{transient expert} mechanism. This transient expert processes initial task updates to evaluate representation similarity, isolating task-specific knowledge before guiding its integration into stable experts. 
% We further introduce a \textbf{consistency-preserving routing bias (CP Bias)}, where the transient expert identifies relevant and irrelevant stable experts based on representation similarity and injects a bias into the routing scores to favour consistent expert selection.
%  a \textbf{transient expert-guided parameter regularization (TE Regularization)} that selectively protects critical historical weights. Together, these mechanisms directly mitigate parameter interference and catastrophic forgetting while enabling knowledge transfer between special tasks. 

% We validate the scalability and generalizability of our method across unimodal and multimodal domains. On SuperNI language generation tasks, our approach achieves state-of-the-art performance on sequential learning while demonstrating significantly superior zero-shot transfer capabilities to unseen domains. Furthermore, our method scales effectively to the VQA v2 benchmark, consistently mitigating catastrophic forgetting across diverse modalities and confirming its robustness as a versatile continual learning framework.

\end{abstract}
%%%%%%%%%%%%%%%%%%%%%%%%%%%%%%%%%%%%%%%%%%%%%%%%%%%%%%%%%%%%%%%%%%%%%%%%%%%%%%%
\section{Introduction}
%%%%%%%%%%%%%%%%%%%%%%%%%%%%%%%%%%%%%%%%%%%%%%%%%%%%%%%%%%%%%%%%%%%%%%%%%%%%%%%
\label{sec:intro}

Large language models (LLMs) and vision--language models (VLMs) have demonstrated strong performance across a wide range of tasks~\citep{liu2024improved,liu2024deepseek,touvron2023llama}. Yet, unlike static benchmark settings, real-world deployment requires these models to continually adapt to non-stationary data streams, where tasks and data distributions evolve over time. This setting, known as continual learning (CL), requires a model to accumulate knowledge across a sequence of tasks without degrading previously acquired capabilities. Despite substantial progress, catastrophic forgetting remains a central challenge~\citep{mccloskey1989catastrophic,yu2024recent}, as updates for new tasks can significantly overwrite earlier representations.

To adapt large foundation models to new tasks efficiently, recent work has increasingly focused on sparse architectures, particularly Mixture-of-Experts (MoE)~\citep{shazeer2017outrageously}. In these models, the backbone is typically frozen to preserve core pre-trained knowledge, while adaptation is handled by a small set of trainable expert modules. The appeal of MoE lies in its partitioned parameter space, which enables task-specific computation paths and provides a natural way to reduce interference across tasks~\citep{aljundi2017expert}. When combined with parameter-efficient fine-tuning methods such as LoRA~\citep{hu2022lora}, this gives rise to LoRA-MoE frameworks~\citep{dou2023loramoe}, which have emerged as a practical and scalable approach to continual adaptation. However, despite their structural sparsity and efficiency, MoE-based models still remain vulnerable to catastrophic forgetting under sequential task training.

 % still face a fundamental trade-off: they either isolate experts too aggressively, limiting knowledge transfer across tasks, or permit task-specific updates to overwrite important existing parameters, leading to severe forgetting. To address this, we propose \textit{CP-MoE}, a continual learning framework built around a transient expert that captures early task-specific updates and guides their integration into stable experts. CP-MoE introduces a consistency-preserving routing bias, which uses the transient expert to estimate representation similarity with stable experts and steer routing towards more compatible expert selection, and a transient expert-guided regularisation mechanism, which selectively protects important historical parameters during merging. Together, these components reduce parameter interference and forgetting while preserving cross-task knowledge transfer. We validate CP-MoE on both unimodal and multimodal continual learning benchmarks with LLM-based and VLM-based MoE models. On SuperNI, spanning diverse sequential language tasks, CP-MoE achieves state-of-the-art performance and stronger zero-shot transfer to unseen tasks. On VQA v2, it scales effectively to multimodal visual reasoning, consistently reduces forgetting, and outperforms strong baselines.

Despite their recent success, continual LoRA-MoE frameworks still face key limitations. Existing methods often prevent forgetting by isolating experts too aggressively~\citep{liang2025gainlora}, typically through a learn-and-freeze strategy that adds new LoRA experts for each task and then keeps them fixed. As a result, the model behaves as a static ensemble with limited cross-task knowledge transfer, leading to weaker generalisation on unseen tasks and poor scalability as the task sequence grows. Second, the load-balancing constraint commonly used in MoE architectures for hardware efficiency can be problematic in continual learning, as it tends to enforce unnecessary uniformity across experts and can limit task-specific specialisation. This calls for routing constraints that account for the semantic alignment between task representations and experts. Finally, most current MoE-based continual learning methods lack mechanisms to protect important parameters during sequential updates, making them vulnerable to forgetting~\citep{huai2025cl}.

To address these challenges, we propose \emph{CP-MoE}, a continual learning framework built around a \emph{transient expert} that acts as a lightweight probe before updates are integrated into the stable expert pool. For each incoming task, the transient expert is briefly adapted on a small warm-up subset to capture early task-specific updates and assess its representation compatibility with existing experts. This design offers three key advantages. First, the transient expert provides a local look-ahead signal of task-induced parameter displacement, enabling more selective integration without dynamic expert expansion. Second, it supports a \emph{consistency-preserving routing bias} that favours experts with more compatible representations, mitigating the excessive uniformity imposed by standard load-balancing constraints and promoting stronger expert specialisation. Third, it enables a \emph{representation-guided regularisation} mechanism that selectively protects important historical parameters during integration, thereby reducing interference and forgetting. Together, these components allow CP-MoE to preserve prior knowledge while maintaining effective knowledge transfer across related tasks. We show that this idea is highly effective in practice. Although CP-MoE introduces only a transient expert for each task, this module is used only during the short warm-up stage and is not retained thereafter, allowing the framework to improve performance without incurring persistent computational overhead. We evaluate CP-MoE on both unimodal and multimodal continual learning benchmarks with LLM-based and VLM-based MoE models, where it achieves state-of-the-art performance and strong zero-shot generalisation to out-of-distribution tasks.

Our contributions are summarised as follows:
\begin{itemize}
    \item \textbf{Transient expert for selective consolidation.} We introduce a transient expert that briefly adapts to each incoming task and serves as a lightweight probe of early task-specific updates and representation compatibility, enabling selective integration into a fixed pool of stable experts without dynamic expansion.

    \item \textbf{Consistency-preserving routing bias.} We propose a routing mechanism that favours experts with more compatible representations, improving routing stability and promoting stronger expert specialisation beyond standard load balancing.

    \item \textbf{Representation-guided regularisation.} We develop a dynamic regularisation scheme that selectively protects important historical parameters during integration, thereby reducing interference and forgetting.

    \item \textbf{Strong empirical performance.} We validate CP-MoE on SuperNI and VQA v2 with LLM-based and VLM-based methods. CP-MoE achieves state-of-the-art performance, stronger zero-shot transfer on unseen tasks, and consistently lower forgetting than strong baselines.
    
\end{itemize}
\section{Related Work}

\paragraph{Continual Learning.}
Continual learning (CL) methods for mitigating catastrophic forgetting are commonly grouped into three paradigms: \textbf{regularisation-based} methods, which constrain updates to parameters deemed important for previous tasks~\citep{kirkpatrick2017overcoming,zenke2017continual}; \textbf{memory-based} methods, which replay stored samples to preserve past decision boundaries~\citep{lopezpaz2017gradient,chaudhry2019efficient,nguyen2024class,buzzega2020dark}; and \textbf{expansion-based} methods, which allocate task-specific parameters or network branches as new tasks arrive~\citep{rusu2016progressive,von2019continual}. While effective for smaller models, these approaches scale poorly to large language models (LLMs) and vision--language models (VLMs), where operating over the full parameter space is often prohibitively expensive in memory and computation. Moreover, many existing CL methods were developed primarily for image classification and do not transfer naturally to large-scale multimodal settings.

\paragraph{Parameter-Efficient Fine-Tuning in Continual Learning.}
Recent work has shown that parameter-efficient fine-tuning (PEFT) can improve continual learning performance by adapting large models with a small number of trainable parameters~\citep{wang2022learning,wang2022dualprompt,hu2022lora}. In particular, many continual learning methods adopt LoRA and allocate a new branch for each task while freezing previous branches to reduce forgetting~\citep{liang2024inflora,wang2023orthogonal,zhao2024sapt}. However, this design often results in a collection of isolated task-specific modules, limiting cross-task knowledge transfer and reducing adaptability. Recent approaches improve forgetting by combining parameter freezing with external gating to select transfer branches~\citep{liang2025gainlora}. Despite these gains, they still generalise poorly to unseen task domains and remain limited in representation fusion across tasks.

\paragraph{Mixture-of-Experts in Continual Learning.}
Recent work has explored Mixture-of-Experts (MoE) architectures to improve stability in continual learning. Prior studies suggest that expert diversification can reduce generalisation error under sequential task learning~\citep{li2025theory}. Existing methods typically isolate task-specific knowledge through input-dependent routing, such as reconstruction-based selectors~\citep{yu2024boosting} or dual-router designs~\citep{huai2025cl}. Other works regularise routing with auxiliary objectives, including consistency distillation~\citep{dai-etal-2022-stablemoe} and orthogonality constraints~\citep{feng2025omoe}, to improve stability and reduce representation collapse. However, these approaches remain vulnerable to semantic shift as new tasks alter the feature space and blur expert boundaries. More importantly, they lack a mechanism to assess representation compatibility before permanent consolidation, making expert updates more vulnerable to parameter interference and forgetting; CP-MoE addresses this gap directly.

\section{Method}

\subsection{Preliminaries}
\label{sec:prelim}

% We first describe the general continual learning setup, the Mixture-of-Experts (MoE) formulation, and the Synaptic Intelligence (SI) importance estimation that underpins our approach.

% \subsubsection{Continual Learning Setting for VQA}
% We consider a sequence of VQA tasks $\{1, 2, \dots, M\}$, where each task $t$ is associated with a dataset $D_t$ containing image-question-answer triplets:
% \[
% D_t = \{(X^{img}_{i}, X^{ques}_{i}, X^{ans}_{i})\}_{i=1}^{N_t}.
% \]
% The learner sequentially trains on task $t$ with the goal of minimizing the prediction error on the current task while mitigating catastrophic forgetting on previous tasks. The optimization objective is to minimize the negative log-likelihood:
% \begin{equation}
% \label{eq:vqa_task_loss}
% \mathcal{L}_{\text{task}}^t(\Theta) = -\mathbb{E}_{(X^{img}, X^{ques}, X^{ans}) \sim D_t} \Big[ \sum_{j=1}^{|X^{ans}|} \log p_\Theta(X^{ans}_j \mid X^{img}, X^{ques}, X^{ans}_{<j}) \Big]
% \end{equation}

\paragraph{Problem Setup}
We consider continual adaptation of a pre-trained foundation model to a sequence of tasks in both unimodal and multimodal settings. Let $\mathcal{T}=\{1,\dots,M\}$ denote a sequence of tasks arriving sequentially, where each task $t \in \mathcal{T}$ is associated with a dataset
\[
D_t=\{(X_i^{(t)}, Y_i^{(t)})\}_{i=1}^{N_t}.
\]
Here, $X_i^{(t)}$ denotes the task input and $Y_i^{(t)}$ the target output sequence. The input may be unimodal, e.g., a text prompt, or multimodal, e.g., an image-text pair.

Let $f_{\Theta_{\mathrm{frozen}},\Phi}$ denote the pre-trained foundation model, where $\Theta_{\mathrm{frozen}}$ are frozen backbone parameters and $\Phi$ are trainable adaptation parameters. In continual learning, the model is fine-tuned sequentially: after task $t-1$, it is adapted to $D_t$ without access to past datasets $\{D_1,\dots,D_{t-1}\}$. The objective is to learn the current task while preserving performance on previous ones. For autoregressive generation, the task loss is
\begin{equation}
\label{eq:task_loss}
\mathcal{L}_{\mathrm{task}}^t(\Phi)
=
-\mathbb{E}_{(X,Y)\sim D_t}
\left[
\sum_{j=1}^{|Y|}
\log p_{\Theta_{\mathrm{frozen}},\Phi}(Y_j \mid X, Y_{<j})
\right].
\end{equation}
We focus on parameter-efficient continual fine-tuning of large foundation models, where full-parameter updates are costly and prone to cross-task interference. In particular, we study \emph{LoRA-MoE}, which combines low-rank adaptation with Mixture-of-Experts (MoE) for efficient and input-dependent adaptation.

\paragraph{Parameter-Efficient MoE (LoRA-MoE)}
Under this setting, the trainable parameters $\Phi$ are introduced as LoRA-MoE modules on top of a frozen backbone. For a frozen feed-forward network (FFN) block $F(\cdot;\Theta_{\mathrm{FFN}})$ and an input hidden state $x \in \mathbb{R}^{d}$, LoRA-MoE keeps $\Theta_{\mathrm{FFN}}$ fixed and augments the block with $n$ parallel low-rank experts. The $i$-th expert is defined as
\begin{equation}
\label{eq:lora_expert}
E_i(x)=B_iA_ix,
\end{equation}
where $A_i \in \mathbb{R}^{r\times d}$ and $B_i \in \mathbb{R}^{k\times r}$ are trainable matrices with rank $r \ll \min(d,k)$. Each expert therefore defines a low-rank adaptation to the frozen FFN, so continual updates remain confined to the lightweight parameter set $\Phi$.

\paragraph{Sparse Gating and Routing}
The contribution of each expert is determined by a routing function
\begin{equation}
\label{eq:gating_function}
G(x)=\mathrm{Softmax}(xW_{\mathrm{gate}}),
\end{equation}
where $G(x)_i$ denotes the routing weight of expert $i$. Let $\mathcal{K}(x)$ denote the indices of the top-$m$ experts selected by $G(x)$. The adapted FFN output is
\begin{equation}
\label{eq:lora_moe_output}
\tilde{F}(x)
=
F(x;\Theta_{\mathrm{FFN}})
+
\frac{{\rho}}{r}\sum_{i\in\mathcal{K}(x)} G(x)_i\,E_i(x),
\end{equation}

{where $\rho > 0$ is a fixed scaling factor and the factor $\rho/r$ makes the magnitude of the aggregated update invariant to the rank $r$. Every adapted FFN block computes $\tilde{F}(x)$ in place of $F(x;\Theta_{\mathrm{FFN}})$, while all other components of the backbone remain unchanged.} This sparse activation enables input-dependent adaptation while decoupling total expert capacity from per-token computation.

Despite these advantages, existing MoE-based adaptation strategies remain poorly suited to continual learning. In particular, they lack mechanisms to integrate new task knowledge into stable experts without harmful interference. As a result, they often face a trade-off: either experts are isolated too strongly, which limits knowledge transfer across related tasks~\citep{liang2025gainlora}, or expert knowledge is merged too aggressively, which overwrites important parameters and causes substantial forgetting~\citep{huai2025cl}.

% \subsubsection{Sparse Gating and Routing}
% \label{sec:routing_prelim}

% The contribution of each expert is determined by a gating function $G(\mathbf{x}) = \text{Softmax}(\mathbf{x}\mathbf{W}_{gate})$. 
% In LoRA-MoE,the output $f(\mathbf{x})$ is the combination of the frozen backbone response and the gated low-rank adaptations:
% \begin{equation}
% \label{eq:lora_moe_output}
%     f(\mathbf{x}) = \mathbf{W}\mathbf{x} + \frac{\alpha}{r} \sum_{i \in \mathcal{K}} G(\mathbf{x})_i E_i(\mathbf{x})
% \end{equation}
% where $\mathcal{K}$ denotes the set of indices for the $k$ experts with the highest gating scores, and $\alpha$ is a constant scaling factor. This sparse activation mechanism within the low-rank Decoupled Parameter Space effectively decouples the model's total capacity from the per-token computational cost \citep{liu2023pushing, huai2025clmoe}.

% ==================================================================
% PART 2: THE CP-MoE FRAMEWORK
% ==================================================================
\subsection{CP-MoE: Consistency-Preserving Mixture-of-Experts}
\label{sec:cp_moe}

To address these limitations, we propose \textbf{CP-MoE} (Consistency-Preserving Mixture-of-Experts), a continual learning framework that integrates new task knowledge into stable experts while mitigating harmful interference and routing instability. Rather than directly updating experts from current-task signals, CP-MoE follows an \emph{assess-then-update} paradigm: it first identifies which experts are suitable for the new task and which parameters should be preserved, and only then performs constrained updates. In this way, CP-MoE avoids the two common failure modes of existing MoE-based continual learning methods: excessive expert isolation, which limits transfer, and overly aggressive merging, which causes forgetting. CP-MoE consists of two key components:
\begin{enumerate}
    \item \textbf{Transient Expert-Guided Parameter Protection.} 
    Before integrating new knowledge into stable experts, CP-MoE uses \emph{transient experts} to capture task-specific adaptation signals. These signals are then used to estimate parameter importance and impose selective regularisation during updating: parameters important for previously acquired knowledge are strongly protected, while less critical parameters remain free to adapt. This enables knowledge transfer without indiscriminate overwriting.

    \item \textbf{Expert Representation Consistency Routing.}
    To stabilise routing during continual adaptation, we introduce a routing objective that encourages consistency between the current input and expert representations. This steers inputs towards experts with compatible representations, while avoiding the rigid uniformity imposed by standard routing objectives and reducing the risk of expert collapse~\citep{chi2022on}.
\end{enumerate}

Together, these two components couple routing and updating: CP-MoE routes inputs to compatible experts and updates them under explicit protection, enabling stable knowledge integration during continual adaptation.

\subsection{Transient Expert}
\label{sec:te}

At the beginning of each task $t$, instead of updating the stable experts $\Phi$ immediately, we instantiate a transient expert $
\phi_t^{\mathrm{TE}}=\{A_t^{\mathrm{TE}}, B_t^{\mathrm{TE}}\},$
implemented as a task-specific LoRA adapter. The transient expert is trained on a warm-up subset $\widehat D_t \subseteq D_t$, while keeping both the frozen backbone $\Theta_{\mathrm{frozen}}$ and the stable LoRA-MoE parameters $\Phi$ fixed. Let $
\widehat{\mathcal L}_{\mathrm{task}}^t(\phi_t^{\mathrm{TE}})$ denote the task loss in Eq.~\ref{eq:task_loss}, evaluated on $\widehat D_t$ and viewed as a function of $\phi_t^{\mathrm{TE}}$ with $(\Theta_{\mathrm{frozen}},\Phi)$ fixed. The transient expert is updated as
\begin{equation}
\label{eq:te_update}
\phi_{t,s+1}^{\mathrm{TE}}
=
\phi_{t,s}^{\mathrm{TE}}
-
\eta \nabla_{\phi}\widehat{\mathcal L}_{\mathrm{task}}^t(\phi_{t,s}^{\mathrm{TE}}),
\qquad s=0,\dots,S-1,
\end{equation}
with {\(S\) the number of warm-up optimisation steps}, and initialisation chosen such that the transient expert has zero initial effect, i.e., \(E_t^{\mathrm{TE}}(x)=0\) at the start of optimisation, where \(\eta\) is the learning rate of the transient expert. Crucially, the transient expert is discarded after warm-up. Instead, we use its optimisation trajectory
\[
\{\phi_{t,s}^{\mathrm{TE}}\}_{s=0}^{S}
\]
to estimate \emph{prospective importance weights} via the path-integral rule~\citep{zenke2017continual}. Let
\[
\Delta \phi_{t,s}^{\mathrm{TE}}
=
\phi_{t,s+1}^{\mathrm{TE}}-\phi_{t,s}^{\mathrm{TE}},
\qquad
g_{t,s}
=
\nabla_{\phi}\widehat{\mathcal L}_{\mathrm{task}}^t(\phi_{t,s}^{\mathrm{TE}}).
\]
{Here $g_{t,s}$ measures the loss sensitivity of each parameter, while $\Delta\phi_{t,s}^{\mathrm{TE}}$ is the movement the optimiser actually made; importance follows from their product rather than gradient magnitude alone.}

{For each parameter index $k$, we accumulate its contribution to the loss reduction achieved over the warm-up trajectory}:
\begin{equation}
\label{eq:omega_te}
\omega_{t,k}
=
\sum_{s=0}^{S-1}
-\, g_{t,s,k}\,\Delta \phi_{t,s,k}^{\mathrm{TE}}.
\end{equation}

{A large $\omega_{t,k}$ indicates that parameter $k$ moved in a direction that consistently reduced the warm-up loss, and is therefore prospectively important for task $t$, whereas the negative sign makes $\omega_{t,k}$ positive for steps that decrease the loss.} After warm-up, the prospective importance is normalised as
\begin{equation}
\label{eq:Omega_te}
\Omega_{t,k}
=
\frac{\omega_{t,k}}
{\bigl(\phi_{t,S,k}^{\mathrm{TE}}-\phi_{t,0,k}^{\mathrm{TE}}\bigr)^2+\xi},
\end{equation}
where \(\xi>0\) is a damping constant. Collecting all entries yields a task-specific importance mask
\[
\Omega_t=\{\Omega_{t,A},\Omega_{t,B}\},
\]
corresponding to the two LoRA factors in
\(
\phi_t^{\mathrm{TE}}=\{A_t^{\mathrm{TE}},B_t^{\mathrm{TE}}\}.
\)
Since the transient and stable experts share the same low-rank parameterisation, \(\Omega_t\) can be transferred to the corresponding stable parameters for protected updating. Hence, the transient expert serves as a disposable look-ahead probe, revealing task-specific adaptation directions before any update is committed to the stable experts. {This probing procedure is illustrated in Figure~\ref{fig:mainfig} (left).}

\begin{theorem}[Transient expert as a local look-ahead estimator]
\label{thm:te_lookahead}
Let \(\widehat{\mathcal L}_{\mathrm{task}}^t(\phi)\) denote the task loss in Eq.~\ref{eq:task_loss}, evaluated on the warm-up subset \(\widehat D_t\), with \((\Theta_{\mathrm{frozen}},\Phi)\) fixed. Suppose that, in a neighbourhood containing the transient trajectory \(\{\phi_{t,s}^{\mathrm{TE}}\}_{s=0}^S\), the warm-up objective admits the quadratic form
\[
\widehat{\mathcal L}_{\mathrm{task}}^t(\phi)
=
\widehat{\mathcal L}_{\mathrm{task}}^t(\phi_{t,0}^{\mathrm{TE}})
+
g_t^\top(\phi-\phi_{t,0}^{\mathrm{TE}})
+
\frac{1}{2}
(\phi-\phi_{t,0}^{\mathrm{TE}})^\top
H_t
(\phi-\phi_{t,0}^{\mathrm{TE}}),
\]
where
\[
g_t
=
\nabla_{\phi}\widehat{\mathcal L}_{\mathrm{task}}^t(\phi_{t,0}^{\mathrm{TE}}),
\qquad
H_t
=
\nabla_{\phi}^2\widehat{\mathcal L}_{\mathrm{task}}^t(\phi_{t,0}^{\mathrm{TE}})
\succeq 0.
\]
If the transient expert is updated by Eq.~\ref{eq:te_update} for \(S\) steps with step size \(\eta < 2/\|H_t\|_2\), then
\begin{equation}
\label{eq:te_filtered_update_subset}
\phi_{t,S}^{\mathrm{TE}}-\phi_{t,0}^{\mathrm{TE}}
=
-\eta \sum_{s=0}^{S-1}(I-\eta H_t)^s g_t
=
- q_S(H_t)\, g_t,
\end{equation}
where $
q_S(\lambda)
=
\eta\sum_{s=0}^{S-1}(1-\eta\lambda)^s.$
Therefore, the transient expert induces a spectrally filtered multi-step adaptation direction for the warm-up objective on \(\widehat D_t\).
\end{theorem}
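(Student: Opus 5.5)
Under the quadratic hypothesis the gradient is affine in the displacement, so gradient descent is a linear recursion; I would solve it in closed form by induction on the number of steps. Write $\delta_s = \phi_{t,s}^{\mathrm{TE}}-\phi_{t,0}^{\mathrm{TE}}$, so that $\delta_0=0$. Differentiating the assumed quadratic form gives, for every $\phi$ in the neighbourhood,
\[
\nabla_\phi \widehat{\mathcal L}_{\mathrm{task}}^t(\phi) = g_t + H_t(\phi-\phi_{t,0}^{\mathrm{TE}}).
\]
The hypothesis says the whole trajectory $\{\phi_{t,s}^{\mathrm{TE}}\}_{s=0}^S$ lies in this neighbourhood. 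Hence every gradient evaluated in Eq.~\ref{eq:te_update} has this affine form, and the update becomes
\[
\delta_{s+1} = (I-\eta H_t)\delta_s - \eta g_t .
\]

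Next I would prove by induction that $\delta_S = -\eta\sum_{s=0}^{S-1}(I-\eta H_t)^s g_t$ for every $S\ge 0$. The base case $S=0$ is the empty sum, which matches $\delta_0=0$. For the inductive step, substitute the formula for $\delta_S$ into the recursion:
\[
\delta_{S+1} = -\eta\sum_{s=1}^{S}(I-\eta H_t)^s g_t - \eta g_t = -\eta\sum_{s=0}^{S}(I-\eta H_t)^s g_t .
\]
This is the first equality in Eq.~\ref{eq:te_filtered_update_subset}.

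For the second equality, $q_S(\lambda)=\eta\sum_{s=0}^{S-1}(1-\eta\lambda)^s$ is a polynomial in $\lambda$. Polynomial functional calculus therefore gives $q_S(H_t)=\eta\sum_{s=0}^{S-1}(I-\eta H_t)^s$ directly, with no spectral assumption needed.

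To justify the phrase ``spectrally filtered'', I would use the eigendecomposition $H_t=U\Lambda U^\top$, which exists because $H_t$ is symmetric and positive semidefinite. Then $q_S(H_t)=U\,q_S(\Lambda)\,U^\top$, so the component of $g_t$ along an eigenvector with eigenvalue $\lambda_i$ is scaled by $q_S(\lambda_i)$. Summing the geometric series gives $q_S(\lambda)=\bigl(1-(1-\eta\lambda)^S\bigr)/\lambda$ for $\lambda>0$, and $q_S(0)=\eta S$.

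This is where the step-size condition is used. Since $0\le\eta\lambda_i\le\eta\|H_t\|_2<2$, we have $|1-\eta\lambda_i|<1$ for every $\lambda_i>0$. Each such filter coefficient is therefore bounded and converges to $1/\lambda_i$ as $S\to\infty$. So $q_S(H_t)$ behaves as a regularised, pseudo-inverse-like (Newton-like) filter, and $-q_S(H_t)g_t$ is a well-behaved multi-step descent direction.

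The main obstacle is not algebraic but interpretive: the closed form relies on the trajectory staying inside the region where the quadratic model is exact. I would rely directly on the theorem's hypothesis for this rather than try to prove it, noting only that the condition $\eta<2/\|H_t\|_2$ prevents divergence along any curvature direction. I would also remark that zero eigenvalues contribute linear growth $\eta S$ in their directions, which is harmless for finite $S$.
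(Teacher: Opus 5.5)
Your proposal is correct and follows essentially the same route as the paper's proof. The paper also turns the affine gradient into the recursion $\delta_{s+1}=(I-\eta H_t)\delta_s-\eta g_t$, unrolls it from $\delta_0=0$, identifies the sum with $q_S(H_t)g_t$ by polynomial functional calculus, and reads it through the eigendecomposition and the closed form $q_S(\lambda)=\bigl(1-(1-\eta\lambda)^S\bigr)/\lambda$, $q_S(0)=\eta S$. One minor difference: you actually use the condition $\eta<2/\|H_t\|_2$ (bounded gains tending to $1/\lambda_i$), whereas the paper's proof never invokes it and only remarks that low-curvature directions are accumulated more strongly.
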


Theorem~\ref{thm:te_lookahead} shows that the transient expert captures more than an instantaneous gradient on the new task. Although trained only on \(\widehat D_t\), its short warm-up trajectory induces a filtered multi-step adaptation direction for the local task objective, making it a suitable probe for estimating prospective parameter importance before updating the stable experts. This differs from MAML-style look-ahead~\citep{finn2017model} in both role and optimisation. In MAML, the inner-loop updates are differentiated through to learn a meta-initialisation, which typically restricts the number of steps for efficiency. By contrast, the transient expert is not meta-optimised: its warm-up trajectory serves only as a task-local probe of the parameter displacement induced by the current task.

\paragraph{Why transient experts are preferable to direct look-ahead in MoE.}
CP-MoE performs task-specific look-ahead using a single transient expert, while keeping the stable experts fixed during assessment. This makes the procedure lightweight and avoids multi-step optimisation over the stable MoE system. By contrast, direct look-ahead on stable experts would require unrolling updates through routed experts, and potentially the router, making optimisation heavier and more susceptible to cross-expert interference. The transient expert therefore provides a cheap and isolated probe for estimating task-specific importance before protected updates are applied to the stable experts.

{\paragraph{Relation to SI/EWC.}
While Eq.~\ref{eq:omega_te} adopts the path-integral rule of SI~\citep{zenke2017continual}, 
the transient expert changes \emph{where} and \emph{when} importance is estimated. 
SI and EWC estimate importance \emph{retrospectively} on the trained model after each 
task, yielding a single global importance measure. In contrast, the transient expert
instead provides a \emph{prospective} estimate along an \emph{isolated} trajectory: it is trained
briefly with the backbone and all stable experts frozen, so the measured displacement reflects the new task alone, unaffected by concurrent updates to the stable pool or by shifts in routing. The probe is discarded afterwards, and its importance mask is transferred \emph{per expert},
weighted by the CKA-based consistency score (Eq.~\ref{eq:omega_accum}), rather than applied uniformly as in EWC/SI.}

% \subsubsection{Path Integral Importance Estimation}
% We employ a path integral method \citep{zenke2017continual} to estimate the per-parameter importance for learned tasks. This measure is computed by accumulating the gradient contribution over the entire optimization trajectory. For each parameter $\theta_{i},$ we accumulate a contribution score $\omega_{i}$ at each update step $k$:
% \begin{equation}
%     \omega_{i} \leftarrow \omega_{i} - g_{i}(\theta_{k}) \cdot (\theta_{k+1,i} - \theta_{k,i}).
% \end{equation}
% At the end of training for task $t$, the final importance weights $\Omega_{t,i}$ is normalized as:
% \begin{equation}
%     \Omega_{t,i} = \frac{\omega_{i}}{(\theta_{t,i} - \theta_{init,i})^2 + \xi}
% \end{equation}
% This results in a per-parameter local importance mask $\Omega_{t}$.

\subsection{Expert Representation Consistency Routing}

\subsubsection{Motivation: The Tension Between Load Balancing and Representation Consistency}

In modern Mixture-of-Experts (MoE) architectures, load balancing is essential for efficient expert utilisation and hardware efficiency. Earlier methods typically impose this through an \emph{auxiliary load-balancing loss} (\(\mathcal{L}_{\mathrm{aux}}\))~\citep{lepikhin2020gshard,fedus2022switch}, while more recent architectures use \emph{dynamic bias updates} to reduce the optimisation interference of an explicit auxiliary objective~\citep{wang2024auxiliary}.

The issue in \emph{continual learning} is therefore not whether load balancing is needed, but how it is enforced. Because continual data are sequential and highly non-stationary across tasks, overly rigid pressure towards uniform expert usage can conflict with expert specialisation. The router may then assign semantically mismatched inputs to historical experts simply to maintain balanced utilisation, exposing specialised experts to irrelevant updates and causing semantic interference.

To resolve this, we propose \textbf{CP-MoE}, which preserves the benefits of load balancing while making routing \emph{representation-consistent}. Concretely, CP-MoE introduces a structural inductive bias that favours experts whose representations are more compatible with the incoming task, thereby enabling efficient expert usage without allowing load-balancing pressure to override expert specialisation.

\begin{figure}
    \centering
    \includegraphics[width=1\linewidth]{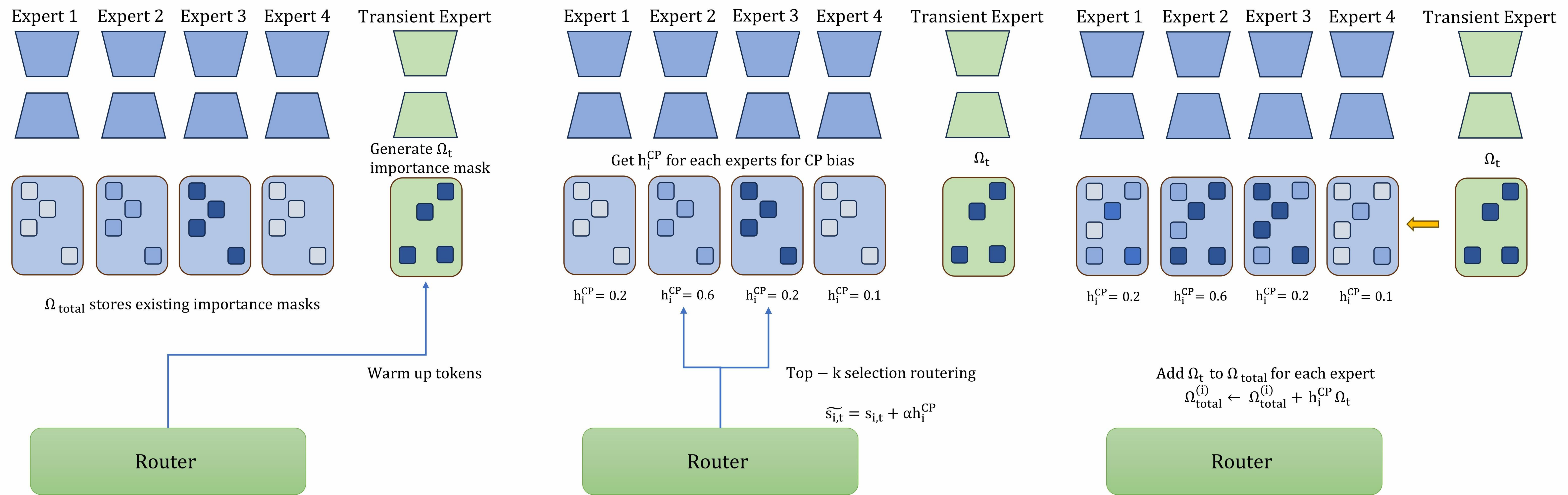}
    \caption{\textbf{Overview of the CP-MoE Framework.} 
    (Left) \textbf{Transient Expert Probing}: A task-specific transient expert (TE) is optimised on warm-up tokens to derive the prospective importance mask $\Omega_t$. 
    (Middle) \textbf{Expert Representation Consistency Routing}: The Centered Kernel Alignment (CKA) between the TE and each stable expert (SE) is measured to produce the representation-consistency scores $h_i^{\mathrm{CP}}$. These scores are subsequently injected as a structural inductive bias to guide the router towards semantically compatible experts, ensuring that load-balancing pressure does not override expert specialisation. 
    (Right) \textbf{Transient Expert-Guided Parameter Protection}: After training on the new task, the importance mask $\Omega_t$ is selectively accumulated into the expert-specific importance matrices $\Omega_{\mathrm{total}}^{(i)}$. This accumulation is weighted by the representation-consistency scores $h_i^{\mathrm{CP}}$, ensuring that stable experts with higher semantic alignment to the current task receive prioritised parameter protection to mitigate catastrophic forgetting.}
    \label{fig:mainfig}
\end{figure}

\subsubsection{Consistency-Preserving Routing Bias (CP Bias)}
\label{sec:cp_bias}

We implement representation-consistent routing via a \emph{consistency-preserving} (CP) bias. Specifically, we use the transient expert (TE) from Sec.~\ref{sec:te} as a task-specific probe, and measure its similarity to each stable expert (SE) using \emph{Centered Kernel Alignment} (CKA).

{Let \(Z^{\mathrm{TE}} \in \mathbb{R}^{N_{\text{warm-up}} \times D}\) and \(Z_i^{\mathrm{SE}} \in \mathbb{R}^{N_{\text{warm-up}} \times D}\) denote the centred activation matrices of the transient expert and the \(i\)-th stable expert, respectively, computed on the same warm-up tokens, where \(N_{\text{warm-up}}\) is the number of warm-up tokens and \(D\) is the hidden dimension}. We define the representation-consistency score of expert \(i\) as
\begin{equation}
\label{eq:cka_score}
h_i^{\mathrm{CP}}
=
\mathrm{CKA}(Z^{\mathrm{TE}}, Z_i^{\mathrm{SE}})
=
\frac{\|(Z_i^{\mathrm{SE}})^\top Z^{\mathrm{TE}}\|_F^2}
{\|(Z^{\mathrm{TE}})^\top Z^{\mathrm{TE}}\|_F \, \|(Z_i^{\mathrm{SE}})^\top Z_i^{\mathrm{SE}}\|_F},
\end{equation}
where \(\|\cdot\|_F\) denotes the Frobenius norm. This gives a task-dependent prior \(h_i^{\mathrm{CP}} \in [0,1]\) for each stable expert.

We inject this prior into routing by biasing the router logits. For token \(t\), let $ s_{i,t}=x_t^\top W_i$
denote the original routing logit for expert \(i\), where \(x_t\) is the token representation and \(W_i\) is the corresponding router parameter. We then define the biased logit
\begin{equation}
\label{eq:cp_biased_logit}
\tilde{s}_{i,t}=s_{i,t}+\alpha h_i^{\mathrm{CP}},
\end{equation}
where \(\alpha \ge 0\) controls the strength of the CP bias. Let \(\mathcal{K}_t\) be the set of top-\(K\) experts under \(\{\tilde{s}_{i,t}\}_{i=1}^n\). The final routing weights are
\begin{equation}
\label{eq:cp_routing_weight}
{G_{i,t}}
=
\begin{cases}
\dfrac{\exp(\tilde{s}_{i,t})}{\sum_{j \in \mathcal{K}_t}\exp(\tilde{s}_{j,t})}, & i \in \mathcal{K}_t, \\[6pt]
0, & \text{otherwise}.
\end{cases}
\end{equation}

{During training, the routing weights $G_{i,t}$ defined in Eq.~\ref{eq:cp_routing_weight} replace $G(x)_i$ in the block output of Eq.~\ref{eq:lora_moe_output}. They therefore determine how expert contributions are combined at every adapted FFN block, and enter $\mathcal{L}_{\mathrm{task}}$ (Eq.~\ref{eq:task_loss}) through the resulting token logits.} The CP bias acts as a stabilising prior: it favours experts whose representations are more compatible with the incoming task, while preserving sparse routing and standard load balancing. Consequently, expert utilisation remains efficient, but routing is less likely to send semantically mismatched inputs to historical experts, reducing routing instability and semantic interference during continual adaptation. {This routing mechanism is illustrated in Figure~\ref{fig:mainfig} (middle).}

\paragraph{Load-Balancing Auxiliary Loss.}
While the CP bias stabilises routing by favouring representation-consistent experts, a load-balancing constraint is still required to prevent expert collapse. We therefore retain the standard auxiliary load-balancing loss
\begin{equation}
\label{eq:aux_loss}
\mathcal{L}^t_{\mathrm{aux}}=\sum_{i=1}^{n} f_i P_i,
\end{equation}
where \(f_i\) is the fraction of tokens routed to expert \(i\) in the current batch, and $P_i=\frac{1}{T}\sum_{t=1}^{T}\mathrm{Softmax}(s_{i,t})$
is the mean native routing probability of expert \(i\) over the \(T\) tokens in the batch.

Importantly, \(P_i\) is computed only from the native routing logits \(s_{i,t}\), without the CP bias. This keeps load balancing decoupled from the consistency prior: the CP bias guides routing towards representation-compatible experts, while \(\mathcal{L}_{\mathrm{aux}}\) maintains balanced expert usage.

\subsection{Transient Expert-Guided Parameter Protection}
\label{sec:protection}

To mitigate catastrophic forgetting, CP-MoE penalises drift in stable expert parameters using accumulated importance weights. Rather than applying a binary mask, we maintain a continuous importance matrix that selectively regularises parameters according to their estimated relevance to previously acquired knowledge.

\paragraph{Importance accumulation.}
Let \(\Omega_t=\{\Omega_{t,A},\Omega_{t,B}\}\) denote the task-specific prospective importance mask estimated from the transient expert. For each stable expert \(E_i\), we accumulate importance as
\begin{equation}
\label{eq:omega_accum}
\Omega^{(i)}_{A,\mathrm{total}}
\leftarrow
\Omega^{(i)}_{A,\mathrm{total}} + h_i^{\mathrm{CP}} \,\Omega_{t,A},
\qquad
\Omega^{(i)}_{B,\mathrm{total}}
\leftarrow
\Omega^{(i)}_{B,\mathrm{total}} + h_i^{\mathrm{CP}} \,\Omega_{t,B},
\end{equation}
where \(h_i^{\mathrm{CP}}\) is the CKA-based consistency score in Eq.~\ref{eq:cka_score}. Thus, experts that are more aligned with the incoming task receive stronger protection, while unrelated experts are not over-constrained.

\paragraph{Regularisation on low-rank adapters.}
We apply the accumulated importance directly to the LoRA parameters of each stable expert. Let \(A_i^{\mathrm{old}}\) and \(B_i^{\mathrm{old}}\) denote the parameter snapshots saved after the previous task. The regularisation term is
\begin{equation}
\label{eq:reg_loss_final}
\mathcal{L}_{\mathrm{reg}}
=
\sum_{i=1}^{n}
\left(
\left\langle
\Omega^{(i)}_{A,\mathrm{total}},
(A_i-A_i^{\mathrm{old}})^{\odot 2}
\right\rangle
+
\left\langle
\Omega^{(i)}_{B,\mathrm{total}},
(B_i-B_i^{\mathrm{old}})^{\odot 2}
\right\rangle
\right),
\end{equation}
where \(\langle U,V\rangle\) denotes the Frobenius inner product. This regulariser preserves historically important directions within the low-rank adaptation subspace. {This protection mechanism is illustrated in Figure~\ref{fig:mainfig} (right).}

\paragraph{Overall objective.}
For task \(t\), the final training objective combines the task loss, the importance-weighted regularisation, and the auxiliary load-balancing loss:
\begin{equation}
\label{eq:total_loss}
\mathcal{L}_{\mathrm{total}}^{t}
=
\mathcal{L}_{\mathrm{task}}^{t}
+
\lambda\,\mathcal{L}_{\mathrm{reg}}^{t}
+
\gamma\,\mathcal{L}_{\mathrm{aux}}^{t},
\end{equation}
where \(\lambda\) and \(\gamma\) are balancing coefficients.

\section{Experiments}
\label{section:experiments}

\paragraph{Datasets.}
We evaluate CP-MoE on continual learning benchmarks in both unimodal language and multimodal vision--language settings.

\begin{itemize}
    \item \textbf{Super-NaturalInstructions (SuperNI).} To assess continual adaptation in long-sequence language tasks, we use SuperNI~\citep{wang2022supernaturalinstructions}, which covers diverse NLP tasks including summarization, information extraction, and dialogue generation. Each task is cast as an instruction-following text generation problem.
    
    \item \textbf{VQA v2.} For multimodal evaluation, we use VQA v2~\citep{goyal2017making}, which contains over 200k images and 1.1M questions. Following the VQACL protocol~\citep{zhang2023vqacl}, we split the benchmark into 10 sequential tasks by reasoning type: recognition, location, judge, commonsense, count, action, color, type, subcategory, and causal. This setting tests whether CP-MoE can mitigate catastrophic forgetting in multimodal representation learning.
\end{itemize}

\paragraph{Evaluation metrics.} We report three metrics to capture the plasticity--stability trade-off:
\begin{itemize}
    \item \textbf{Average Performance (AP):} the average performance over all seen tasks after the final training stage, measured by accuracy for VQA v2 and ROUGE-L for SuperNI.
    \item \textbf{Average Forgetting (AF/AF\textsuperscript{*}):} {AF is the signed average drop from each task's post-training performance to its final value (negative indicates beneficial backward transfer); AF\textsuperscript{*} is the conventional non-negative variant, averaging each task's drop from its best to its final performance.}

    \item \textbf{Zero-Shot Transfer (ZST):} for SuperNI, the average performance on 7 unseen tasks after completing the 8-task training sequence, measuring the model's out-of-distribution generalisation after continual learning.
\end{itemize}

\paragraph{Implementation details.} For unimodal language experiments, we use Llama-2-7B~\citep{touvron2023llama} as the backbone, while for multimodal experiments we use LLaVA-1.5-7B~\citep{liu2024improved}. In all settings, CP-MoE is instantiated with LoRA-based experts while the backbone remains frozen, and all methods are optimised with AdamW using a learning rate of \(2\times 10^{-4}\) and a cosine decay schedule.

For unimodal experiments, each task is trained for 5 epochs with a per-device batch size of 16. The transient expert is warm-started for 10{,}000 tokens. We insert LoRA experts into all dense layers, including the self-attention projections \texttt{(q\_proj, k\_proj, v\_proj, o\_proj)} and MLP layers \texttt{(gate\_proj, up\_proj, down\_proj)}. We use 8 experts with rank \(r=4\) and LoRA dropout 0.1. The CP-bias coefficient is set to \(\alpha=0.2\), while the balancing coefficients are \(\lambda=5\times10^3\) and \(\gamma=0.1\).

For VQAv2, we keep the same base optimisation hyperparameters but adapt the architecture to the multimodal setting. Specifically, training is limited to 1 epoch, LoRA experts are injected only into the MLP layers, and we use a more compact configuration with 4 experts of rank \(r=4\). In addition, the transient-expert warm-up is increased to 100{,}000 tokens to provide a more stable initialisation for multimodal routing.

% --- TABLE 1: SUPERNI SUMMARY (MAIN RESULT) ---
\begin{table}[htbp]

\centering
\small 
\setlength{\tabcolsep}{5.5pt}
\renewcommand{\arraystretch}{1.15}
\caption{Performance comparison on SuperNI. AP: Average Performance; AF: Average Forgetting
(signed, based on backward transfer); AF*: conventional Forgetting, the average drop from each
task's best to its final performance; ZST: Zero-shot Transfer. \textbf{Bold} indicates the best
result.}
\label{tab:main_results}
\begin{tabular}{l cccc cccc}
\toprule
\multirow{2}{*}{Method} & \multicolumn{4}{c}{Order 1} & \multicolumn{4}{c}{Order 2} \\
\cmidrule(lr){2-5} \cmidrule(lr){6-9}
 & AP $\uparrow$ & AF $\downarrow$ & AF$^*$
 $\downarrow$ & ZST $\uparrow$ & AP $\uparrow$ & AF $\downarrow$ & AF$^*$
 $\downarrow$ & ZST $\uparrow$ \\
\midrule
InfLoRA & 47.28 & 1.05 & 1.81 & 27.43 & 50.06 & \textbf{-0.28} & 0.63 & 30.79 \\
O-LoRA  & 50.12 & \textbf{-2.10} & 1.23 & 33.30 & 45.20 & 0.37 & 2.34 & 27.43 \\
GainLoRA (Inf) & 45.57 & -0.28 & \textbf{0.61} & 27.77 & 47.27 & 0.30 & \textbf{0.52} & 30.86 \\
GainLoRA (O) & 49.60 & 0.82 & 1.73 & 33.80 & \textbf{51.54} & -0.21 & 0.55 & 33.64 \\
\midrule
\rowcolor{gray!10} \textbf{CP-MoE (Ours)} & \textbf{50.84} & 0.62 & 1.32 & \textbf{35.80} & 50.56 & 0.73 & 1.18 & \textbf{35.91} \\
\bottomrule
\end{tabular}
\end{table}

\subsection{Main Results}
\label{subsec:results}

% \paragraph{Unimodal Language Generation and Zero-shot Generalization.}
% The results on the SuperNI benchmark (Table~\ref{tab:main_results}, Table~\ref{tab:main_tasks}, and Table~\ref{tab:zeroshot_tasks}) further validate the robustness of CP-MoE. As shown in Table~\ref{tab:main_results}, CP-MoE achieves the highest Average Performance (AP) of 50.84\% and Zero-shot Transfer (ZST) of 35.80 in Order 1.
\paragraph{Unimodal Language Generation and Zero-shot Generalisation.}
Table~\ref{tab:main_results} compares CP-MoE against MoE-based fine-tuning baselines on SuperNI with Llama-2-7B. In Order 1, CP-MoE achieves the best overall results, attaining the highest Average Performance (AP) of 50.84\% and Zero-shot Transfer (ZST) of 35.80\%.

In Order 2, CP-MoE again achieves a strong AP of 50.56\%, remaining highly competitive with the best GainLoRA variant (51.54\%). However, our analysis suggests that the apparent advantage of GainLoRA in this setting is partly attributable to misaligned evaluation metrics and instruction formatting, which lead to inflated scores. We provide a detailed case analysis in Appendix~\ref{subsec:qualitative}.

To further assess out-of-distribution generalisation, we evaluate zero-shot transfer on unseen tasks (Tasks 9--15). As shown in Table~\ref{tab:main_results}, CP-MoE achieves a ZST score of 35.80\%, substantially outperforming GainLoRA-olora (33.80\%) and GainLoRA-infolora (27.77\%). These results suggest that grounding routing decisions in intrinsic consistency, rather than rigid load-balancing constraints, helps preserve a more stable and transferable representation space through continual updates. In addition, our importance-based parameter protection regularises weight updates to better retain pre-trained knowledge while allowing task-relevant adaptation. Together, these properties promote both stronger expert specialisation and more effective knowledge transfer, which in turn lead to better out-of-distribution performance.

% --- TABLE 4: VQA V2 SOTA COMPARISON ---
\begin{table*}[ht]
\centering
\caption{\textbf{Performance (\%) comparison on VQA v2.} We compare CP-MoE against state-of-the-art continual learning methods. The top two sections cite results from \citep{huai2025cl}. The bottom section reports our reproduction under a unified experimental setting to ensure fair comparison. Our CP-MoE achieves the highest Average Performance (AP) and minimal Average Forgetting (AF).}
\label{tab:sota_comparison}
\resizebox{\textwidth}{!}{%
\begin{tabular}{lcccccccccccc}
\toprule
\multirow{2.5}{*}{\textbf{Methods}} & \multicolumn{10}{c}{\textbf{Various task in VQA v2}} & \multirow{2.5}{*}{$AP(\uparrow)$} & \multirow{2.5}{*}{$AF(\downarrow)$} \\ 
\cmidrule(lr){2-11}
 & Rec. & Loc. & Jud. & Com. & Cou. & Act. & Col. & Typ. & Sub. & Cau. & & \\ 
\midrule
\multicolumn{13}{l}{\textit{VL-T5 based methods (Reported in \citep{huai2025cl})}} \\ 
\midrule
Vanilla \citep{cho2021unifying} & 7.39 & 4.94 & 22.29 & 32.30 & 0.71 & 12.14 & 12.10 & 10.69 & 27.29 & 15.10 & 14.49 & 30.15 \\
EWC \citep{kirkpatrick2017overcoming} & 6.73 & 8.43 & 27.22 & 47.10 & 0.14 & 12.40 & 1.76 & 10.98 & 31.05 & 11.85 & 15.77 & 28.38 \\
MAS \citep{aljundi2018memory} & 30.81 & 8.07 & 25.50 & 4.00 & 31.90 & 32.39 & 26.24 & 24.75 & 19.85 & 2.75 & 20.56 & 21.97 \\
ER \citep{chaudhry2019continual} & 18.64 & 21.36 & 61.27 & 64.17 & 30.29 & 52.84 & 43.39 & 23.31 & 42.75 & 11.85 & 36.99 & 4.80 \\
DER \citep{buzzega2020dark} & 14.55 & 13.83 & 62.88 & 65.16 & 30.96 & 51.19 & 40.51 & 19.04 & 42.87 & 12.55 & 35.35 & 6.58 \\
VS \citep{wan2022continual} & 15.66 & 19.21 & 59.86 & 32.16 & 27.28 & 47.79 & 32.32 & 20.44 & 41.38 & 10.20 & 34.03 & 11.68 \\
VQACL \citep{zhang2023vqacl}& 20.47 & 28.02 & 62.55 & 68.61 & 29.35 & 50.66 & 44.45 & 26.36 & 44.65 & 12.60 & 38.77 & 2.90 \\ 
\midrule
\multicolumn{13}{l}{\textit{LLaVA-7B based methods (Reported in \citep{huai2025cl})}} \\ 
\midrule
Vanilla & 19.25 & 14.81 & 54.59 & 56.97 & 24.23 & 46.20 & 27.58 & 26.09 & 36.47 & 18.89 & 32.51 & 20.69 \\
EWC & 28.12 & 23.02 & 61.50 & 61.08 & 26.13 & 54.29 & 23.65 & 32.25 & 44.97 & 17.83 & 37.28 & 15.27 \\
MAS & 31.54 & 22.09 & 60.85 & 46.32 & 32.48 & 56.47 & 30.05 & 35.69 & 42.73 & 18.83 & 37.71 & 14.91 \\
ER & 29.31 & 25.74 & 63.46 & 65.78 & 31.92 & 58.39 & 45.17 & 34.55 & 46.24 & 18.96 & 41.95 & 10.20 \\
DER & 26.95 & 21.43 & 64.88 & 66.17 & 31.01 & 55.92 & 44.60 & 32.85 & 47.09 & 20.74 & 41.16 & 11.28 \\
VS & 28.48 & 24.09 & 61.37 & 67.20 & 29.56 & 54.64 & 33.98 & 32.91 & 45.82 & 19.89 & 39.79 & 12.70 \\
VQACL & 34.14 & 32.19 & 66.15 & 63.00 & 33.01 & 60.91 & 34.64 & 38.48 & 47.94 & 24.42 & 43.49 & 9.10 \\ 
\midrule
\multicolumn{13}{l}{\textit{Unified MoE Implementation (Ours)}} \\ 
\midrule
CL-MoE \citep{huai2025cl} & 55.74 & 41.74 & 80.39 & 76.77 & 49.47 & 75.41 & 73.56 & 63.17 & 61.03 & \textbf{30.41} & 60.77 & 1.77 \\
\textbf{CP-MoE (Ours)} & \textbf{57.96} & \textbf{43.73} & \textbf{82.52} & \textbf{79.05} & \textbf{52.87} & \textbf{77.21} & \textbf{74.79} & \textbf{64.16} & \textbf{62.41} & 29.95 & \textbf{62.30} & \textbf{0.35} \\ 
\bottomrule
\end{tabular}%
}
\end{table*}

\paragraph{VQA v2 Benchmark.}
Table~\ref{tab:sota_comparison} reports the comparative results on the VQA v2 benchmark. CP-MoE consistently outperforms both rehearsal-based methods and recent MoE-based baselines in the multimodal task-incremental setting. In particular, CP-MoE achieves the best \textbf{Average Performance (AP)} of 62.30\%. More importantly, it attains a near-zero \textbf{Average Forgetting (AF)} of \textbf{0.35\%}, substantially improving over the previous state of the art, CL-MoE (1.77\%). These results show that CP-MoE not only delivers strong performance in unimodal language continual learning, but also generalises effectively to multimodal settings while mitigating catastrophic forgetting. This advantage over CL-MoE stems from two key design differences: first, CP-MoE explicitly accounts for representation similarity during routing, whereas CL-MoE ignores it; second, CP-MoE preserves task-specific important parameters, while CL-MoE relies on simple linear weighting that is more prone to overwriting previously acquired knowledge.

\begin{table}[ht]
{
\centering
\small
\caption{\textbf{Ablation Study on SuperNI Main Tasks (1-8), mean$_{\pm\text{std}}$ over 5 seeds.}  CP Bias: Consistency-Preserving Bias, TE Reg.: Transient Expert Regularisation. In the lower block, the row without checkmark removes the CKA-based importance mask.}
\label{tab:ablation_main_components}
\resizebox{\columnwidth}{!}{%
\begin{tabular}{ccccccccccccc}
\toprule
\multicolumn{2}{c}{\textbf{Modules}} & 1572 & 363 & 1290 & 181 & 002 & 1510 & 639 & 1729 & ACC & AF \\
\textbf{CP Bias} & \textbf{TE Reg.} & & & & & & & & & & \\
\midrule
-- & -- & 17.15 & 87.67 & 27.00 & 65.50 & 64.62 & 97.82 & 11.89 & 16.26 & 48.49$_{\pm1.27}$ & 2.28$_{\pm0.49}$ \\
-- & \checkmark & 28.74 & 87.00 & 28.07 & 61.71 & 72.99 & 96.97 & 10.61 & 16.88 & 50.37$_{\pm0.39}$ & \textbf{0.76}$_{\pm0.10}$ \\
\checkmark & \checkmark & {32.60} & 88.00 & 28.39 & 57.44 & {74.43} & 97.94 & 11.59 & {17.22} & \textbf{50.95}$_{\pm0.62}$ & 0.86$_{\pm0.05}$ \\
\midrule
\multicolumn{2}{c}{\textbf{with CKA}} & 1572 & 363 & 1290 & 181 & 002 & 1510 & 639 & 1729 & ACC & AF \\
\midrule
\multicolumn{2}{c}{--} & 33.95 & 88.00 & 26.02 & 54.81 & 72.20 & 91.02 & 8.72 & 14.71 & 48.68$_{\pm1.29}$ & 1.64$_{\pm0.14}$ \\
\multicolumn{2}{c}{\checkmark} & {32.60} & {88.00} & 28.39 & 57.44 & {74.43} & 97.94 & 11.59 & {17.22} & \textbf{50.95}$_{\pm0.62}$ & \textbf{0.86}$_{\pm0.05}$ \\
\bottomrule

\end{tabular}%
}}
\end{table}

\subsection{Ablation Study}

Due to space constraints, we present only the core ablations on the main components of CP-MoE, including the effect of the CKA-guided importance mask, in the main paper. Additional ablation studies are deferred to the Appendix.

\paragraph{Main components of CP-MoE.}
We examine the contribution of the key components of CP-MoE, namely Consistency-Preserving Bias (CP-Bias) and Transient Expert Regularisation (TE-Reg) in Table~\ref{tab:ablation_main_components}. Starting from the baseline (ACC {48.49}\%, AF {2.28}\%), adding TE-Reg substantially reduces Average Forgetting (AF) to {0.76}\% while improving Average Performance (ACC) to {50.37}\%. Incorporating CP-Bias further improves ACC to 50.95\%, at the cost of a marginal increase in AF (0.76 → 0.86), which remains well below the baseline (2.28). These consistent gains indicate that TE-Reg and CP-Bias play complementary roles: TE-Reg protects previously acquired knowledge from being overwritten, while CP-Bias improves routing consistency, together mitigating catastrophic forgetting and improving overall performance.

\paragraph{Effect of the CKA-based consistency score.}
Table~\ref{tab:ablation_main_components} also highlights the contribution of the CKA-based consistency score. When \(h_i^{\mathrm{CP}}\) is removed and the same regularisation strength is applied to all experts, performance deteriorates noticeably: average forgetting rises from {0.86} to {1.64}, while average accuracy drops by about {2}\%. This indicates that a uniform regularisation scheme cannot capture the different roles of individual experts. By contrast, the CKA-based consistency score modulates the regularisation strength according to representation alignment, leading to better preservation of task-specific knowledge and less cross-task interference.

{
\paragraph{Effect of the load-balancing loss.}
We further ablate the auxiliary load-balancing loss by setting $\gamma=0$. 
As shown in Table~\ref{tab:gamma_ablation}, removing $\mathcal{L}_{\mathrm{aux}}$ 
leads to unbalanced expert utilisation and a substantial performance drop 
(AP 45.97 vs.\ 50.84, AF 2.08 vs.\ 0.62), confirming that a small $\gamma$ 
is necessary to prevent expert collapse in our fixed-pool setting, while the 
CP bias remains responsible for steering routing 
(recall that $\mathcal{L}_{\mathrm{aux}}$ is computed on the native logits 
$s_{i,t}$ and is decoupled from the CP bias).
}

\begin{table}[ht]
\centering
\small

\caption{Ablation of the load-balancing coefficient $\gamma$ on SuperNI (Order 1).}
\label{tab:gamma_ablation}
\begin{tabular}{ccc}
\toprule
$\gamma$ & AP ($\uparrow$) & AF ($\downarrow$) \\
\midrule
0 & 45.97 & 2.08 \\
0.1 & \textbf{50.84} & \textbf{0.62} \\
\bottomrule
\end{tabular}
\end{table}

\section{Conclusion}
\label{sec:conclusion}

We presented \emph{CP-MoE}, a continual learning framework for LoRA-based Mixture-of-Experts that mitigates forgetting while preserving effective knowledge transfer in both unimodal and multimodal settings. By introducing a transient expert as a lightweight probe, together with consistency-preserving routing and representation-guided regularisation, CP-MoE enables more selective consolidation and stronger protection of important historical knowledge without dynamic expert expansion or persistent computational overhead. Experiments on SuperNI and VQA v2 showed that CP-MoE achieved state-of-the-art performance, lower forgetting, and stronger zero-shot generalisation than strong MoE and PEFT baselines. We hope this work encourages future research on more adaptive and scalable continual learning methods for MoE-based foundation models.

\section*{Acknowledgements}
This work was supported by the US Army International Technology Center Pacific (ITC-IPAC) under Contract No.\ FA520923C0020, and by the ARC Training Centre for Whole Life Design of Carbon Neutral Infrastructure (IC230100015). The authors acknowledge support from the ARC Centre of Excellence for Automated Decision-Making and Society (CE200100005). Computations were performed on the Katana computational cluster, supported by Research Technology Services at UNSW Sydney.

\newpage

\bibliography{collas2026_conference}
\bibliographystyle{collas2026_conference}

\newpage
\appendix
\section{Theoretical Results}

\subsection{Proof for Theorem 1}

\begin{proof}
Under the quadratic model,
\[
\nabla_\phi \widehat{\mathcal L}_{\mathrm{task}}^t(\phi)
=
g_t + H_t(\phi-\phi_{t,0}^{\mathrm{TE}}).
\]
Define
\[
\delta_s := \phi_{t,s}^{\mathrm{TE}}-\phi_{t,0}^{\mathrm{TE}}.
\]
Then \(\delta_0=0\), and the update in Eq.~\ref{eq:te_update} becomes
\[
\delta_{s+1}
=
\delta_s - \eta\bigl(g_t + H_t\delta_s\bigr)
=
(I-\eta H_t)\delta_s - \eta g_t.
\]
Unrolling this recursion from \(\delta_0=0\) yields
\[
\delta_S
=
-\eta \sum_{s=0}^{S-1}(I-\eta H_t)^s g_t.
\]
Since \(\delta_S=\phi_{t,S}^{\mathrm{TE}}-\phi_{t,0}^{\mathrm{TE}}\), we obtain
\[
\phi_{t,S}^{\mathrm{TE}}-\phi_{t,0}^{\mathrm{TE}}
=
-\eta \sum_{s=0}^{S-1}(I-\eta H_t)^s g_t.
\]

Now define
\[
q_S(\lambda):=\eta\sum_{s=0}^{S-1}(1-\eta\lambda)^s.
\]
By polynomial functional calculus,
\[
q_S(H_t)=\eta\sum_{s=0}^{S-1}(I-\eta H_t)^s,
\]
hence
\[
\phi_{t,S}^{\mathrm{TE}}-\phi_{t,0}^{\mathrm{TE}}
=
- q_S(H_t)\, g_t.
\]

To interpret this operator, let \(H_t = U\Lambda U^\top\), where
\(\Lambda=\mathrm{diag}(\lambda_1,\dots,\lambda_d)\) with \(\lambda_j\ge 0\). Then
\[
q_S(H_t)=U\,q_S(\Lambda)\,U^\top,
\qquad
q_S(\Lambda)=\mathrm{diag}\bigl(q_S(\lambda_1),\dots,q_S(\lambda_d)\bigr).
\]
Thus each eigendirection of \(H_t\) is rescaled by the scalar gain \(q_S(\lambda)\). Moreover, for \(\lambda>0\),
\[
q_S(\lambda)
=
\frac{1-(1-\eta\lambda)^S}{\lambda},
\qquad
q_S(0)=\eta S.
\]
Hence flatter directions (\(\lambda\) small) are accumulated more strongly across the \(S\) warm-up steps, while higher-curvature directions are relatively attenuated. Therefore, the transient expert does not merely follow the one-step gradient \(g_t\), but estimates a local multi-step adaptation direction that favours directions remaining useful under repeated descent on \(\widehat D_t\). This proves the claim.
\end{proof}

\section{Detailed Experimental Setup}
\label{sec:appendix_setup}

\paragraph{Implementation Details}
Following the established protocols in GainLoRA, all methods in our SuperNI experiments are implemented using the instruction tuning paradigm. Models are optimised using the AdamW optimiser. All experiments are conducted on NVIDIA H200 GPUs to ensure consistent computational environments.

\paragraph{SuperNI Benchmark Tasks}
To evaluate the zero-shot generalisation and continual learning capabilities, we select 15 tasks from the SuperNI benchmark. These tasks are categorised into five types, as detailed in Table~\ref{tab:superni_details}. For generation tasks, we report \textbf{ROUGE-L}; for classification tasks, we report \textbf{Accuracy}.

% --- TABLE 1: TASK MAPPING ---
\begin{table}[ht]
\centering
\small
\caption{Details of the 15 selected tasks in the SuperNI Benchmark.}
\label{tab:superni_details}
\resizebox{0.9\columnwidth}{!}{%
\begin{tabular}{clll}
\toprule
\textbf{Task ID} & \textbf{Dataset Name} & \textbf{Task Type} & \textbf{Metric} \\
\midrule
1572 & samsum\_summary & Question Answering & Rouge-L \\
363 & sst2\_polarity\_classification & Sentiment Analysis & Accuracy \\
1290 & xsum\_summarization & Question Answering & Rouge-L \\
181 & outcome\_extraction & Info Extraction & Rouge-L \\
002 & quoref\_answer\_generation & Dialogue Gen & Rouge-L \\
1510 & evaluation\_relation\_extraction & Info Extraction & Rouge-L \\
639 & multi\_woz\_user\_utterance\_generation & Summarization & Rouge-L \\
1729 & personachat\_generate\_next & Summarization & Rouge-L \\
073 & commonsenseqa\_answer\_generation & Dialogue Gen & Rouge-L \\
1590 & diplomacy\_text\_generation & Summarization & Rouge-L \\
748 & glucose\_reverse\_cause\_event\_detection & Info Extraction & Rouge-L \\
511 & reddit\_tifu\_long\_text\_summarization & Question Answering & Rouge-L \\
591 & sciq\_answer\_generation & Dialogue Gen & Rouge-L \\
1687 & sentiment140\_classification & Sentiment Analysis & Accuracy \\
875 & emotion\_classification & Sentiment Analysis & Accuracy \\
\bottomrule
\end{tabular}%
}
\end{table}

\paragraph{Task Sequences (Orders)}
In line with the methodology of GainLoRA, we evaluate CP-MoE under two distinct task sequences (Order 1 and Order 2) to ensure the robustness of our results. The specific sequences of Task IDs are presented in Table~\ref{tab:task_orders}.

% --- TABLE 2: TASK SEQUENCE ---
\begin{table}[ht]
\centering
\small
\caption{Task sequences for the SuperNI benchmark.}
\label{tab:task_orders}
\resizebox{0.9\columnwidth}{!}{%
\begin{tabular}{lcl}
\toprule
\textbf{Benchmark} & \textbf{Order} & \multicolumn{1}{c}{\textbf{Task Sequence}} \\
\midrule
\multirow{6}{*}{SuperNI} & \multirow{3}{*}{1} & task1572 $\rightarrow$ task363 $\rightarrow$ task1290 $\rightarrow$ task181 $\rightarrow$ task002 $\rightarrow$ \\
  &  & task1510 $\rightarrow$ task639 $\rightarrow$ task1729 $\rightarrow$ task073 $\rightarrow$ task1590 $\rightarrow$ \\
  &  & task748 $\rightarrow$ task511 $\rightarrow$ task591 $\rightarrow$ task1687 $\rightarrow$ task875 \\
\cmidrule{2-3}
  & \multirow{3}{*}{2} & task748 $\rightarrow$ task073 $\rightarrow$ task1590 $\rightarrow$ task639 $\rightarrow$ task1572 $\rightarrow$ \\
  &  & task1687 $\rightarrow$ task591 $\rightarrow$ task363 $\rightarrow$ task1510 $\rightarrow$ task1729 $\rightarrow$ \\
  &  & task181 $\rightarrow$ task511 $\rightarrow$ task002 $\rightarrow$ task1290 $\rightarrow$ task875 \\
\bottomrule
\end{tabular}%
}
\end{table}

\subsection{Hyperparameter Sensitivity Analysis}
\label{subsec:sensitivity}

We conduct a sensitivity analysis on the routing bias coefficient ($\alpha$) and the rigidity constraint weight ($\lambda$). As shown in Table~\ref{tab:hp_sensitivity}, CP-MoE maintains consistently high Average Performance (AP) and low Average Forgetting (AF) across a broad range of configurations. The model experiences severe forgetting only when $\alpha \to 0$ (effectively removing the CKA-guided semantic consistency), validating that our structural inductive bias is the primary driver of performance rather than meticulous hyperparameter tuning.

% --- TABLE 3: SENSITIVITY ---
\begin{table}[t]

\centering
\caption{Hyperparameter Sensitivity Analysis. Performance on SuperNI when varying the routing bias ($\alpha$), rigidity constraint ($\lambda$), and transient-expert warm-up length. The optimal configuration is highlighted in bold.}
\label{tab:hp_sensitivity}
\begin{tabular}{ccccccc}
\toprule
Bias ($\alpha$) & Rigidity ($\lambda$) & Warm-up & AP ($\uparrow$) & AF ($\downarrow$) & Zero-shot ($\uparrow$) \\
\midrule
0.1 & $5 \times 10^3$ & 10k & 50.48 & 0.67 & \textbf{36.17} \\
\textbf{0.2} & $\mathbf{5 \times 10^3}$ & \textbf{10k} & \textbf{50.84} & \textbf{0.62} & 35.80 \\
0.3 & $5 \times 10^3$ & 10k & 50.08 & 1.46 & 36.09 \\
0.4 & $5 \times 10^3$ & 10k & 50.38 & 0.88 & 35.14 \\
\midrule
0.2 & $1 \times 10^3$ & 10k & \textbf{51.07} & \textbf{0.43} & 35.62 \\
0.2 & $5 \times 10^3$ & 10k & 50.84 & 0.62 & \textbf{35.80} \\
0.2 & $1 \times 10^4$ & 10k & 50.15 & 1.43 & 35.39 \\
\midrule
0.2 & $5 \times 10^3$ & \textbf{10k} & 50.84 & \textbf{0.62} & 35.80 \\
0.2 & $5 \times 10^3$ & 20k & 50.75 & 2.52 & 36.07 \\
0.2 & $5 \times 10^3$ & 30k & \textbf{51.13} & 2.75 & \textbf{36.87} \\
\bottomrule
\end{tabular}
\end{table}

\subsection{Evaluation metrics and instruction analysis}
\label{subsec:qualitative}

While CP-MoE exhibits lower automated scores (e.g., ROUGE, Exact Match) on specific QA datasets compared to the aggressively fine-tuned LoRA branch, a manual review reveals that these static metrics systematically misrepresent the actual knowledge retention of the models. The performance gap predominantly stems from formatting misalignment and metric rigidity rather than catastrophic forgetting.

\paragraph{Task Instruction Setup}
During evaluation, models are prompted with the following instruction:

\begin{quote}
\textit{Definition: Given a scientific question, generate a correct answer to it.} \\
\textit{Now complete the following example -} \\
\textit{Input: \{Question\}} \\
\textit{Output:}
\end{quote}

This instruction requires the model to generate a correct answer but does \textbf{not} impose any hard constraints on the output length (e.g., restricting the answer to 1-2 words). This open-ended formulation allows for both concise entities and complete explanatory sentences. As illustrated in the case studies below, $n$-gram matching metrics fail to capture semantic equivalence and actively penalise grammatically complete answers.

\paragraph{Case 1: False Positives via Lexical Overlap}
\begin{itemize}
    \item \textbf{Question:} Particulates cause lung diseases. They can also increase the risk of heart disease and the number of what?
    \item \textbf{Ground Truth:} asthma attacks
    \item \textbf{LoRA Branch:} heart attacks
    \item \textbf{CP-MoE:} strokes
\end{itemize}

\textbf{Analysis:} Medical consensus links particulate pollution to an increased risk of strokes. CP-MoE outputs a factually correct prediction. The LoRA branch repeats the word ``heart'' from the prompt, producing a redundant prediction. However, because the LoRA prediction shares the token ``attacks'' with the ground truth, it receives an artificially high ROUGE score, whereas CP-MoE scores zero. The metric registers a false positive based on spurious lexical overlap while penalising factual accuracy.

\paragraph{Case 2: Penalising Semantic Variation and Syntactic Completeness}
\begin{itemize}
    \item \textbf{Question:} Why do birds need a light-weight body?
    \item \textbf{Ground Truth:} to stay aloft
    \item \textbf{LoRA Branch:} to fly
    \item \textbf{CP-MoE:} Birds need light-weight bodies to fly efficiently.
\end{itemize}

\textbf{Analysis:} While ``stay aloft'' accurately targets the fundamental aerodynamic requirement of overcoming gravity, CP-MoE provides a valid, evolutionarily consistent semantic variation (``fly efficiently'') embedded in a grammatically complete sentence. Since the prompt instruction does not limit the output to isolated phrases, CP-MoE's response is perfectly aligned with the task. However, the metric fails here on two fronts: it cannot recognise valid semantic paraphrasing, and it mathematically penalises the precision score due to the length of CP-MoE's conversational format. The LoRA branch achieves a higher F1 score strictly because its truncated answer (``to fly'') shares the preposition ``to'' and has a smaller denominator, not because it demonstrates superior knowledge retention.

\paragraph{Conclusion}
These cases confirm that CP-MoE effectively preserves high-resolution continuous knowledge. The apparent degradation in metrics is an artefact of the evaluation method's inability to assess semantic equivalence and its mathematical sensitivity to sequence length and exact token overlap.

\subsection{Parameter Efficiency and Computational Cost}
\label{subsec:efficiency}

As a PEFT-based framework, CP-MoE is designed to balance superior resistance to catastrophic forgetting with high computational efficiency. Its modular architecture allows for flexible configurations: a comprehensive Full-projection setup for complex instructions (e.g., SuperNI) and an optimised FFN-only setup for domain-specific tasks (e.g., VQA v2). 

The trainable parameters of CP-MoE consist of three core components: the split-expert LoRA branch ($P_{\mathrm{SE}}$), the transient expert branch ($P_{\mathrm{TE}}$), and the routing logic ($P_{\mathrm{Router}}$). For a target linear module with input dimension $in$ and output dimension $out$, given a total LoRA rank $r$ and $E$ experts (where $r_e = r/E$ is the rank per expert), the parameter count for each component is:
\begin{align}
    P_{\mathrm{SE}}(in, out) &= E \cdot (in \cdot r_e + r_e \cdot out) = r(in + out) \\
    P_{\mathrm{TE}}(in, out) &= r_e(in + out) \\
    P_{\mathrm{Router}}(in) &= in \times E
\end{align}

Specifically, for our SuperNI configuration on Llama-2-7B ($L=32, d=4096, m=11008$), we apply CP-MoE to all seven projection layers ($q, k, v, o, gate, up, down$) with $r=32$ and $E=8$ (rank 4 per expert). Substituting these values, the total trainable parameters $P_{\mathrm{total}}$ are derived as:
\begin{equation}
\begin{aligned}
P_{\mathrm{total}} = L \cdot \Big[ & 4 \cdot \big(P_{\mathrm{SE}}(d, d) + P_{\mathrm{TE}}(d, d) + P_{\mathrm{Router}}(d)\big) \\
& + 2 \cdot \big(P_{\mathrm{SE}}(d, m) + P_{\mathrm{TE}}(d, m) + P_{\mathrm{Router}}(d)\big) \\
& + \big(P_{\mathrm{SE}}(m, d) + P_{\mathrm{TE}}(m, d) + P_{\mathrm{Router}}(m)\big) \Big]
\end{aligned}
\end{equation}
yielding $P_{\mathrm{total}} = 99,057,664 \approx 99.06\mathrm{M}$, which represents $1.48\%$ of the backbone parameters. For the VQA v2 task, we employ an FFN-only configuration ($r=16, E=4$), resulting in $31.46\mathrm{M}$ trainable parameters ($0.47\%$).

    Notably, as shown in Table~\ref{tab:time_cost}, the trainable parameter count for CL-MoE is identical to that of LoRA-MoE . This is because the task-level router in the official implementation of CL-MoE functions as a statistical heuristic for counting expert activations rather than a trainable neural layer. Our experimental setup strictly follows this official implementation protocol.
    
    Table~\ref{tab:time_cost} summarises the computational overhead measured on two NVIDIA H200 GPUs. While the transient expert introduces slight computational costs, the overhead remains within a manageable range. For instance, in VQA v2, CP-MoE incurs approximately $4\%$ additional training time compared to CL-MoE while providing significantly better knowledge retention.

% --- TABLE 2: TIME & COST ---
\begin{table}[!htbp]
\centering
\small
\caption{\textbf{Computational Overhead and Parameter Efficiency.} Measured on two NVIDIA H200 GPUs. \textbf{\%  PEFT} denotes the proportion of total PEFT parameters relative to the full model.}
\label{tab:time_cost}
\resizebox{\columnwidth}{!}{%
\begin{tabular}{llccc}
\toprule
\textbf{Dataset} & \textbf{Method} & \textbf{Train Time / Epoch} & \textbf{PEFT Params} & \textbf{\% PEFT} \\
\midrule
\multirow{3}{*}{\textbf{SuperNI}} 
 & GainLoRA-infLoRA &  96.47 min & 46.83 M & 0.69\% \\
 & GainLoRA-O-LoRA  & 74.21 min & 83.26 M & 1.24\% \\
 & \cellcolor{gray!15}\textbf{CP-MoE (Ours)} & \cellcolor{gray!15}114.35 min & \cellcolor{gray!15}99.06 M & \cellcolor{gray!15}1.48\% \\
\midrule
\multirow{3}{*}{\textbf{VQA v2}} 
 & LoRA-MoE & 635.12 min & 25.66 M & 0.38\% \\
 & CL-MoE   & 628.89 min & 25.66 M & 0.38\% \\
 & \cellcolor{gray!15}\textbf{CP-MoE (Ours)} & \cellcolor{gray!15}657.43 min & \cellcolor{gray!15}31.46 M & \cellcolor{gray!15}0.47\% \\
\bottomrule
\end{tabular}%
}
\end{table}

% --- TABLE 2: SUPERNI MAIN TASKS DETAILS ---
\begin{table}[ht]
\centering
\caption{\textbf{Continual learning performance on SuperNI Main Tasks (1-8) Order 1.}}
\label{tab:main_tasks}
\resizebox{\columnwidth}{!}{%
\begin{tabular}{l cccccccccc}
\toprule
Task ID & 1572 & 363 & 1290 & 181 & 002 & 1510 & 639 & 1729 & \textbf{ACC} & \textbf{AF} \\ 
\midrule
CP-MoE & 32.93 & \textbf{88.00} & \textbf{28.66} & \textbf{61.77} & \textbf{71.64} & 98.25 & \textbf{8.90} & 16.56 & \textbf{50.84} & 0.62 \\
GainLoRA-infolora & 37.89 & 85.00 & 17.15 & 34.5 & 67.38 & 98.75 & 8.53 & 15.34 & 45.57 & \textbf{-0.28} \\
GainLoRA-olora & \textbf{42.65} & \textbf{88.00} & 26.22 & 52.38 & 62.43 & \textbf{99.04} & 8.35 & \textbf{17.73} & 49.60 & 0.82 \\ 
\bottomrule
\end{tabular}%
}
\end{table}

\subsection{Detailed Per-Task Performance on SuperNI Order 1}
\label{subsec:Detailed}
% --- TABLE 3: SUPERNI ZERO-SHOT DETAILS ---
\begin{table*}[!htbp]
\centering
\caption{\textbf{Zero-shot Transfer performance on SuperNI Tasks (9-15) Order 1.}}
\label{tab:zeroshot_tasks}
\resizebox{\textwidth}{!}{%
\begin{tabular}{l cccccccc}
\toprule
\textbf{Method} & Task 073 & Task 1590 & Task 748 & Task 511 & Task 591 & Task 1687 & Task 875 & \textbf{AVG} \\ 
\midrule
CP-MoE & \textbf{42.00} & 10.31 & \textbf{34.78} & \textbf{16.83} & 29.61 & \textbf{70.00} & \textbf{47.00} & \textbf{35.80} \\
GainLoRA-infolora & 24.93 & \textbf{11.35} & 34.08 & 11.95 & 36.55 & 37.18 & 38.33 & 27.77 \\
GainLoRA-olora & 35.00 & 12.33 & 27.68 & 14.52 & \textbf{47.44} & 55.00 & 44.67 & 33.80 \\ 
\bottomrule
\end{tabular}%
}
\end{table*}

Tables \ref{tab:main_tasks} and \ref{tab:zeroshot_tasks} provide the detailed per-task breakdown for the SuperNI benchmark under Order 1.

In the main continual learning sequence (Table \ref{tab:main_tasks}), CP-MoE demonstrates distinct advantages on specific tasks. Notably, on Task 181 and Task 002, it achieves scores of 61.77 and 71.64 respectively, substantially outperforming the GainLoRA variants. Across the remaining main tasks, CP-MoE maintains highly competitive performance, yielding the highest overall Average Performance (ACC) of 50.84 and a robust Average Forgetting (AF) of 0.62.

CP-MoE also demonstrates superior zero-shot generalisation (Table \ref{tab:zeroshot_tasks}). On unseen domains, CP-MoE exhibits significant performance spikes, particularly on Task 073 (42.00) and Task 1687 (70.00), where it outperforms the strongest baseline by large margins. For the rest of the zero-shot sequence, CP-MoE remains consistently competitive, resulting in a state-of-the-art average transfer score of 35.80.

\subsection{Ablation study for VQAV2 benchmark}
\label{subsec:Ablation for vaq}
\begin{table*}[!htbp]
\centering
\caption{Ablation study of the CP-MoE on VQAV2.}
\label{tab:ablation}
\resizebox{\textwidth}{!}{%
\begin{tabular}{@{}cc@{\hskip 0.2in}ccccccccccc@{}}
\toprule
\multicolumn{2}{@{}c@{\hskip 0.2in}}{Modules} & \multicolumn{11}{c}{Metrics} \\
\cmidrule(r){1-2} \cmidrule(l){3-13}
\begin{tabular}[c]{@{}c@{}}CP\\ Bias\end{tabular} & 
\begin{tabular}[c]{@{}c@{}}TE\\ Reg.\end{tabular} & 
Rec. & Loc. & Jud. & Com. & Cou. & Act. & Col. & Typ. & Sub. & Cau. & AVG \\
\midrule
-- & -- & 57.96 & 43.45 & 78.96 & 74.57 & 45.96 & 73.07 & 66.95 & 56.15 & 58.67 & 28.57 & 58.43 \\
-- & $\checkmark$ & 56.31 & 41.17 & 80.85 & 76.53 & 50.74 & 75.69 & 73.93 & 62.76 & 61.85 & 28.11 & 60.79 \\
$\checkmark$ & $\checkmark$ & \textbf{57.96} & \textbf{43.73} & \textbf{82.52} & \textbf{79.05} & \textbf{52.87} & \textbf{77.21} & \textbf{74.79} & \textbf{64.16} & \textbf{62.41} & \textbf{29.95} & \textbf{62.30} \\
\bottomrule
\end{tabular}%
}
\end{table*}

As shown in Table \ref{tab:ablation}, an ablation study on the core components of CP-MoE was conducted on the VQAV2 dataset to evaluate the impact of CP Bias and TE Regularisation. The baseline model without any additional modules achieves an average accuracy of 58.43\%. Introducing only TE Regularisation increases the average accuracy to 60.79\%, with distinct improvements in metrics such as Col. (+6.98\%) and Typ. (+6.61\%). Integrating both the CP Bias and TE Regularisation modules brings the model to optimal performance across all 10 evaluation metrics, resulting in a peak average accuracy of 62.30\%. This confirms that CP Bias works jointly with the regularisation module to improve logical judgment and reasoning capabilities.

\paragraph{Feature Space Isolation.} 
We further extract the output representations of the experts and project them into a 2D
space using t-SNE, as shown in Figure \ref{fig:tsne}. The baseline MoE exhibits severe feature entanglement, where representations from previous and current tasks overlap significantly, explaining the catastrophic forgetting. CP-MoE exhibits
significantly reduced overlap among expert activation regions, with several experts forming more compact and visually
separated clusters. {Moreover, as shown in Figure~\ref{fig:tsne_long_seq}, this separation persists even under a long task sequence (Tasks 11 and 15), demonstrating that CP-MoE maintains stable and well-separated expert representations as the sequence grows.}

% --- FIGURE: TSNE ---
\begin{figure}[!htbp]
\centering
\includegraphics[width=0.45\linewidth]{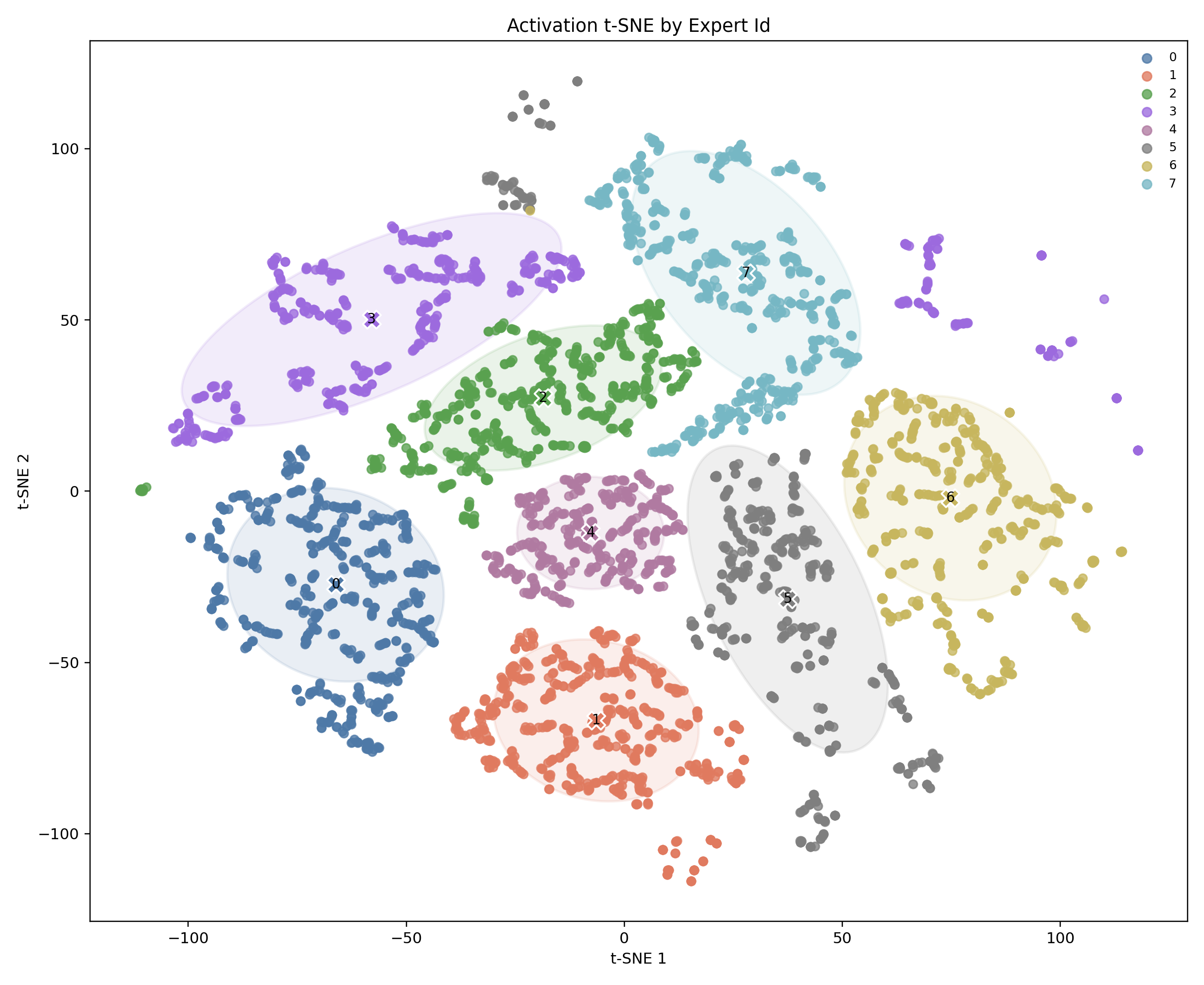} \hfill
\includegraphics[width=0.45\linewidth]{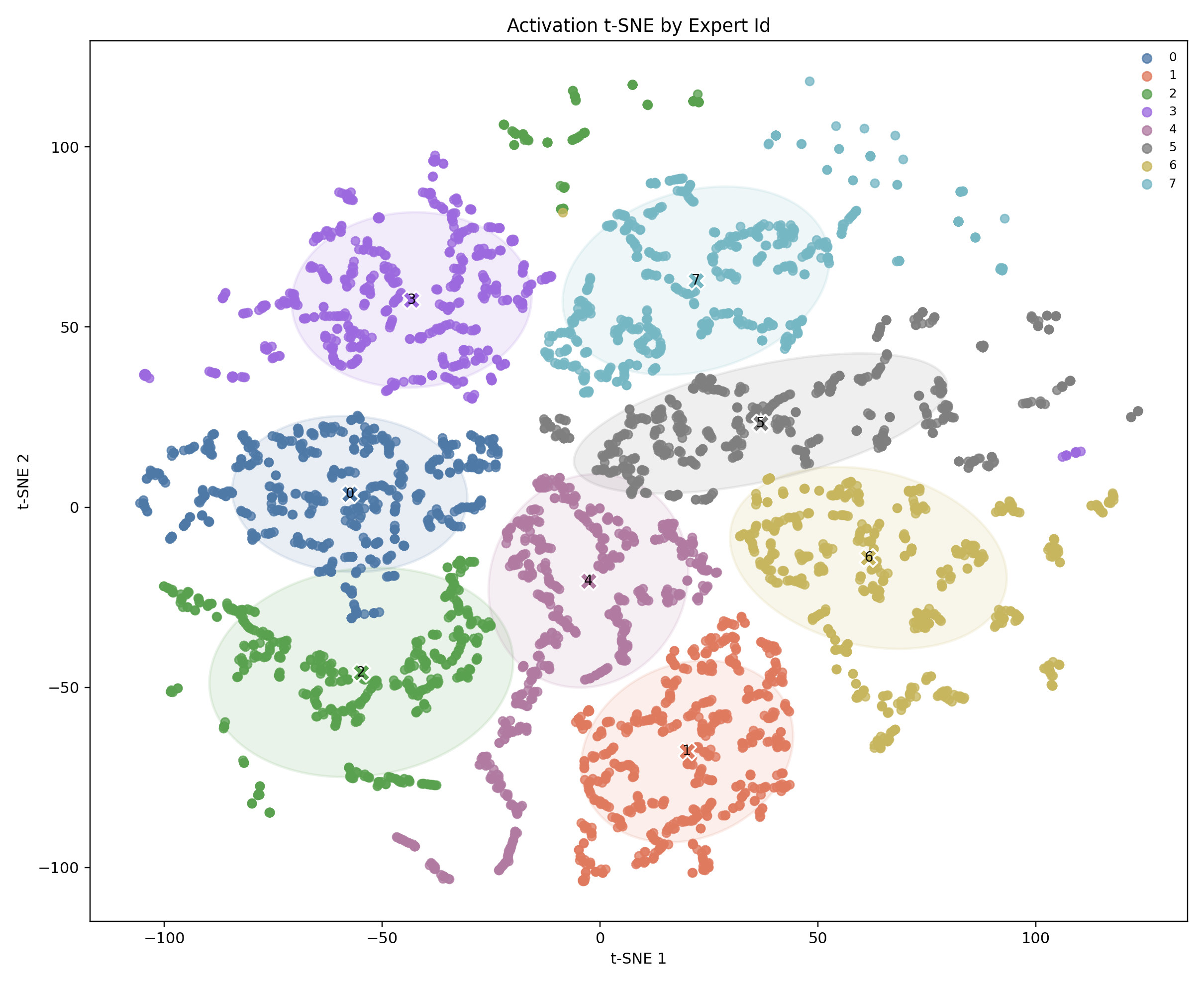}
\caption{\textbf{t-SNE Visualisation of Expert Representations.} \textbf{Left:} CP-MoE maintains clear geometric boundaries with more compact and separated clusters, preventing semantic interference. \textbf{Right:} The LoRA-MoE baseline exhibits severe feature entanglement and overlapping representations.}
\label{fig:tsne}
\end{figure}

    \begin{figure}[!htbp]
\centering
\setlength{\tabcolsep}{2pt}
\renewcommand{\arraystretch}{1.1}
\begin{tabular}{@{}cc@{}}
\small\textbf{CP-MoE (Ours)} & \small\textbf{LoRA-MoE} \\
\includegraphics[width=0.42\textwidth]{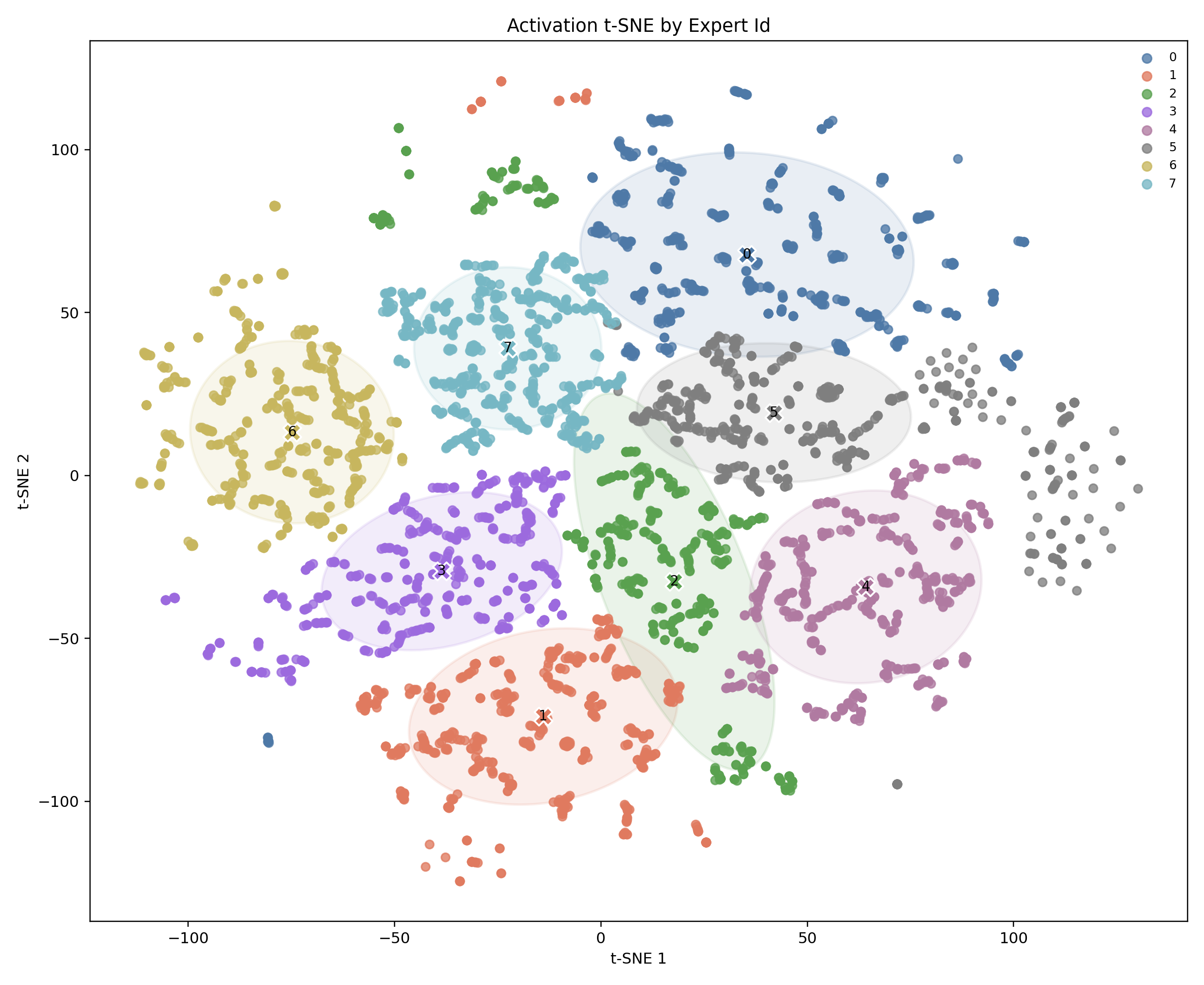} &
\includegraphics[width=0.42\textwidth]{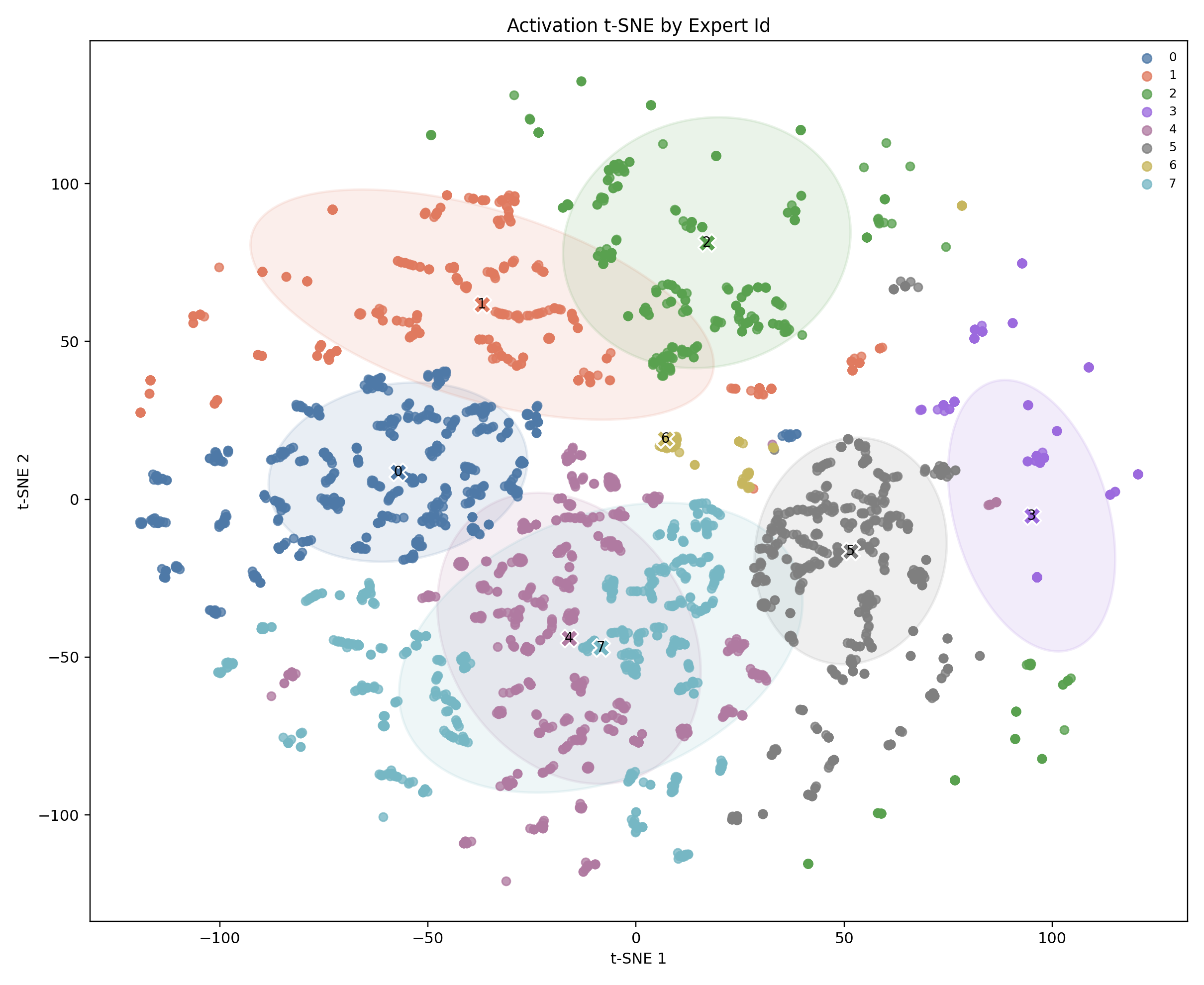} \\
\multicolumn{2}{c}{\small (a) Task 11} \\[4pt]
\includegraphics[width=0.42\textwidth]{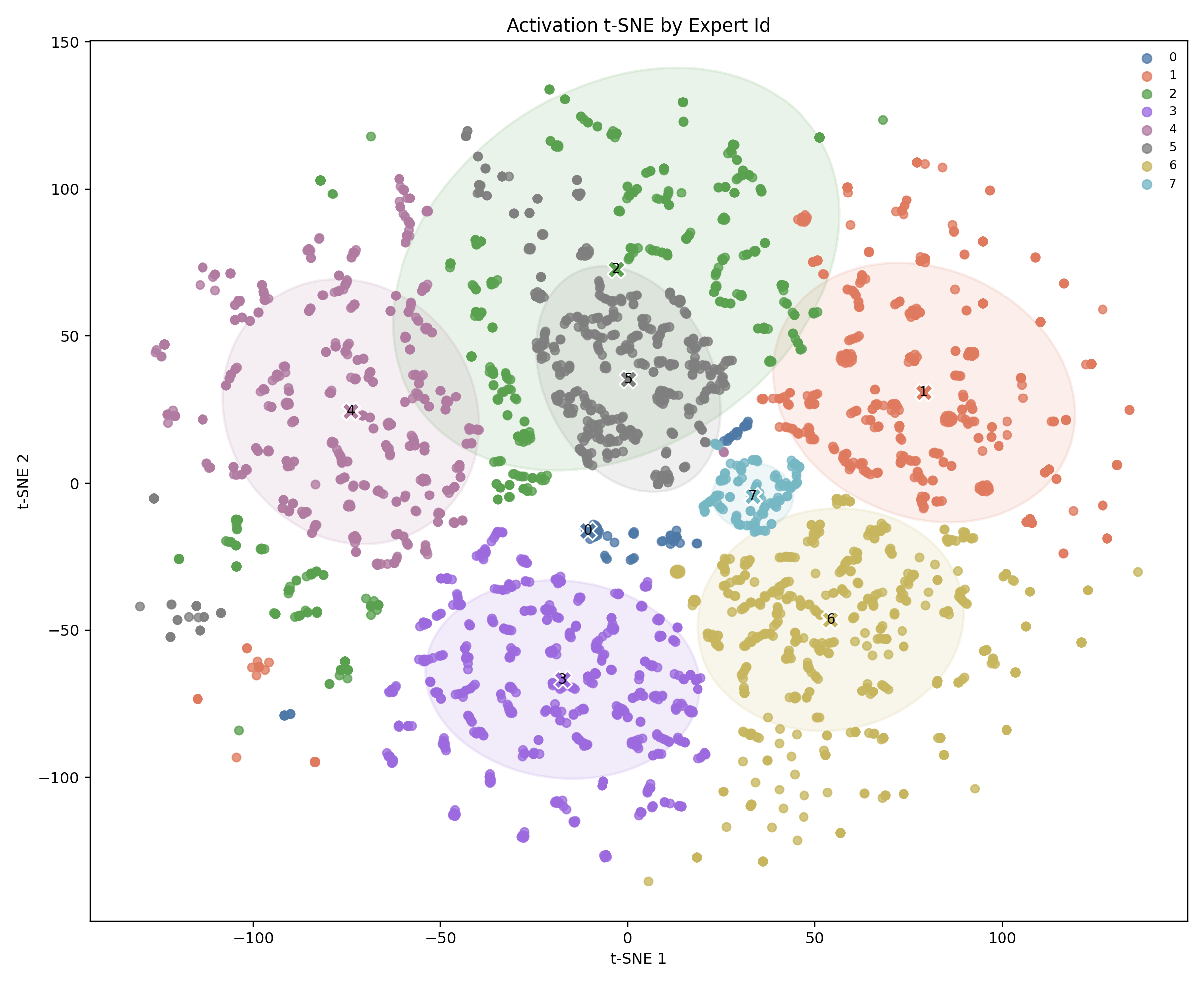} &
\includegraphics[width=0.42\textwidth]{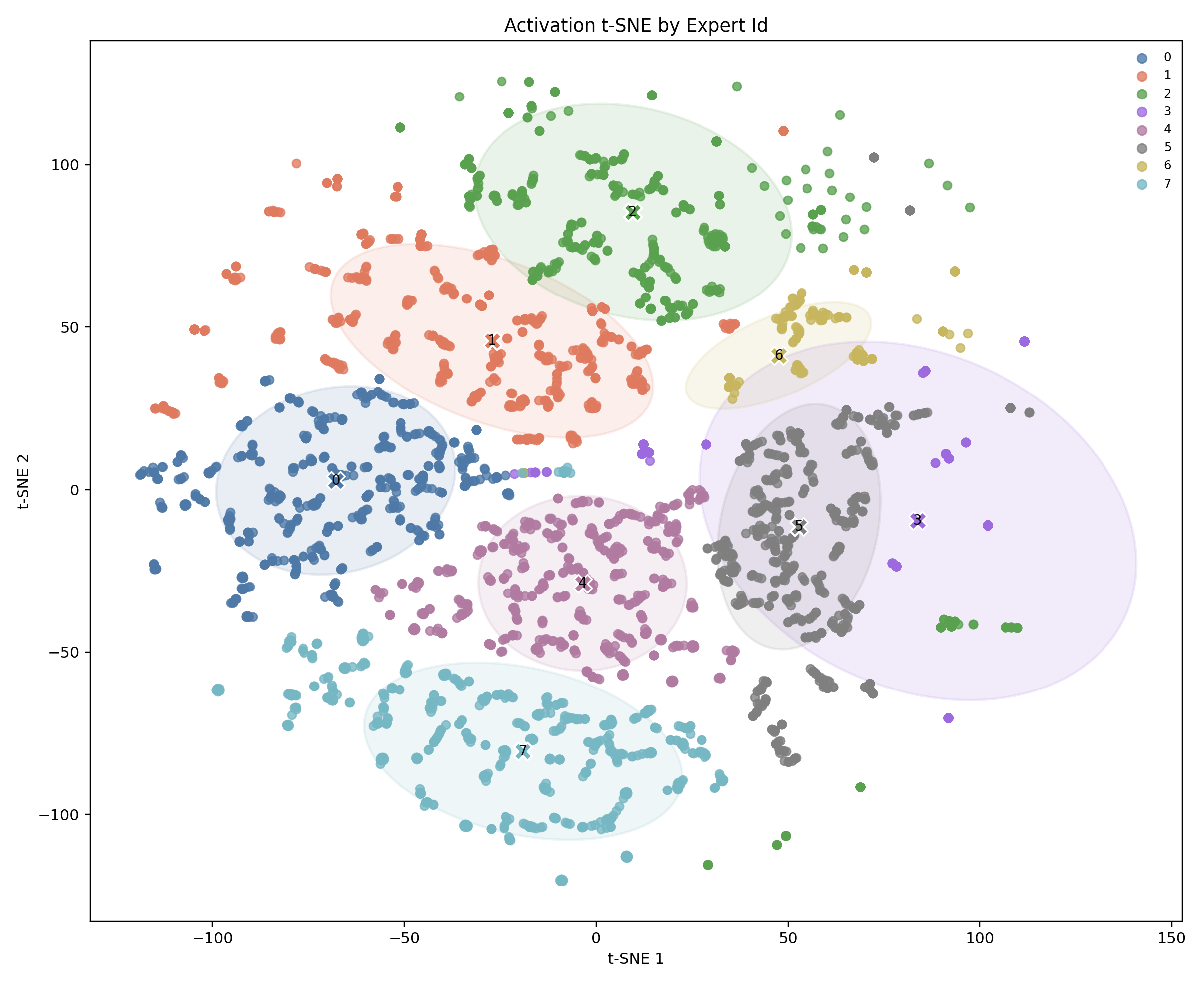} \\
\multicolumn{2}{c}{\small (b) Task 15} \\
\end{tabular}
\caption{\textbf{t-SNE of expert representations under a long task sequence.}
CP-MoE (left) vs.\ LoRA-MoE (right) at Task 11 (a) and Task 15 (b). Even as
the task sequence grows, CP-MoE maintains compact, well-separated expert
clusters, whereas LoRA-MoE exhibits persistent feature entanglement.}
\label{fig:tsne_long_seq}
\end{figure}

\paragraph{Aggregated Load Balancing in Continual Learning.} 
Figure \ref{fig:lb} illustrates the aggregated expert load across all layers (0-31) and experts (0-7) for CP-MoE after completing the continual learning task sequence. The heatmap demonstrates that CP-MoE achieves effective computational load distribution, maintaining reasonable activation proportions across experts. When evaluated alongside the high Average Performance and near-zero Average Forgetting, this distributed activation confirms that CP-MoE effectively balances system load while preserving historical task parameters to prevent catastrophic forgetting.

\begin{figure}[!htbp]
    \centering
    \includegraphics[width=0.75 \linewidth]{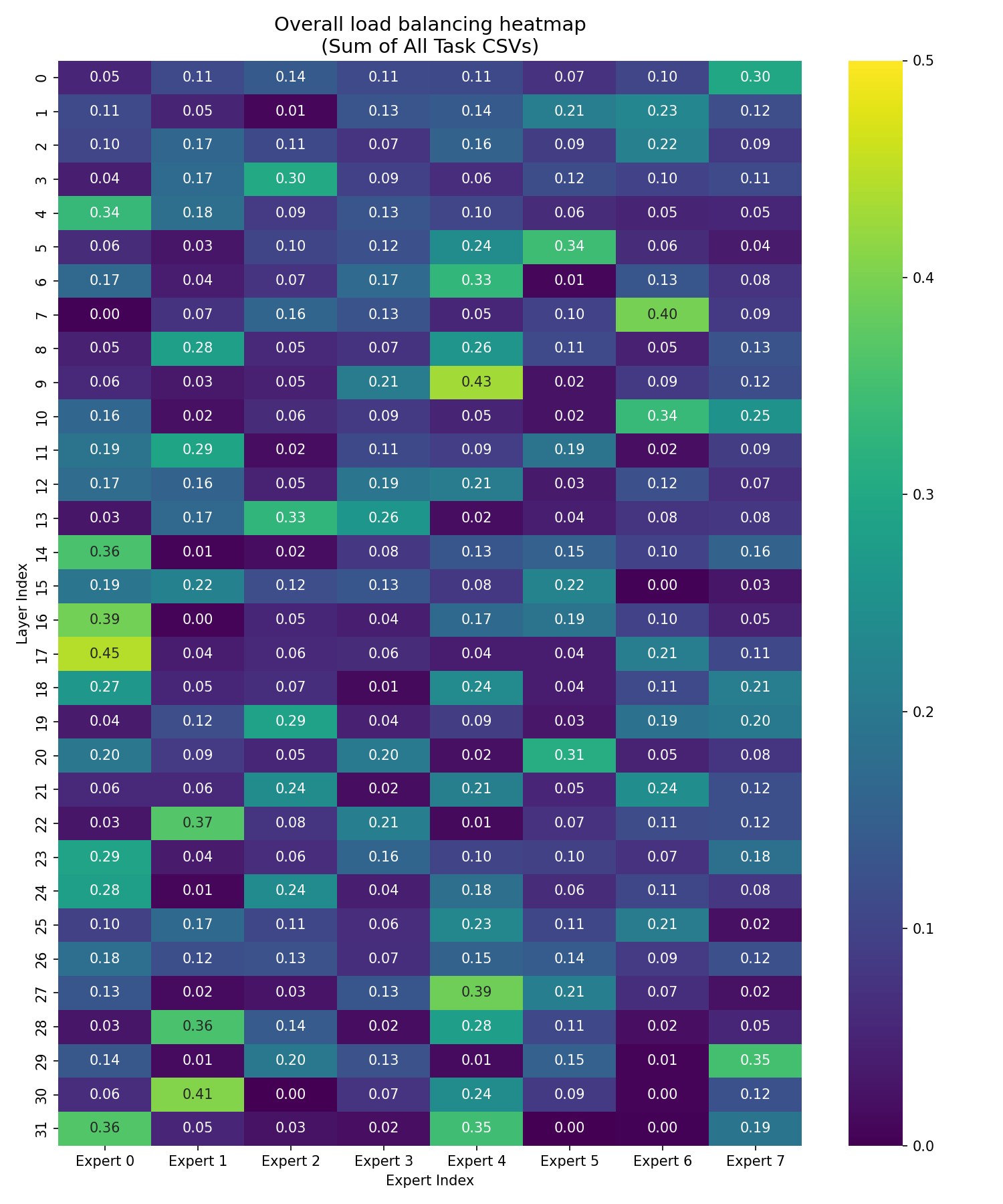}
    \caption{Overall expert load for CP-MoE}
    \label{fig:lb}
\end{figure}

\section{Algorithm}
\label{appendix:algorithm}

For clarity, we summarise the full CP-MoE procedure for a single task $t$ in Algorithm~\ref{alg:cpmoe}. The procedure consists of three stages. Stage~1 trains a transient expert on a warm-up subset $\hat{D}_t$ and uses its optimisation trajectory to estimate the prospective importance mask $\Omega_t$ via the path-integral rule (Eqs.~5--7). Stage~2 computes the CKA-based consistency scores $h^{CP}_i$ between the transient expert and each stable expert, and accumulates the protection-weighted importance into $\Omega^{(i)}_{\text{total}}$ (Eqs.~9, 13). Stage~3 then trains the stable experts on the full task data with CP-bias routing (Eqs.~10--11) and importance-weighted regularisation (Eqs.~14--15). The transient expert is discarded after Stage~2 and incurs no persistent computational overhead at inference.

\begin{algorithm}[H]
\caption{CP-MoE: Continual Learning for Task $t$}
% \caption{CP-MoE: Continual Learning for Task $t$}
\label{alg:cpmoe}
\begin{algorithmic}[1]
\Require Task data $D_t$; frozen backbone $\Theta_{\text{frozen}}$; stable experts $\Phi = \{E_i = B_i A_i\}_{i=1}^{n}$ with router params $\{W_i\}_{i=1}^{n}$; accumulated importance $\{\Omega^{(i)}_{\text{total}}\}_{i=1}^{n}$; parameter snapshots $\{A_i^{\text{old}}, B_i^{\text{old}}\}_{i=1}^{n}$ from task $t{-}1$
\Ensure Updated $\Phi$ and $\{\Omega^{(i)}_{\text{total}}\}$
\Statex \textbf{Stage 1: Transient Expert Probing on warm-up subset $\hat{D}_t$}
\State Initialise $\phi^{TE}_t = \{A^{TE}_t, B^{TE}_t\}$ such that $E^{TE}_t(x) = 0$; \,\, $\omega_t \gets 0$
\For{$s = 0$ to $S{-}1$} \Comment{Eq.~5}
    \State $g_s \gets \nabla_\phi \hat{\mathcal{L}}^{\,t}_{\text{task}}(\phi^{TE}_{t,s})$ on $\hat{D}_t$ \quad with $(\Theta_{\text{frozen}}, \Phi)$ fixed
    \State $\phi^{TE}_{t,s+1} \gets \phi^{TE}_{t,s} - \eta\, g_s$; \quad $\Delta\phi_s \gets \phi^{TE}_{t,s+1} - \phi^{TE}_{t,s}$
    \State $\omega_t \gets \omega_t + (-g_s \odot \Delta\phi_s)$ \Comment{Eq.~6}
\EndFor
\State $\Omega_t \gets \omega_t \oslash \big((\phi^{TE}_{t,S} - \phi^{TE}_{t,0})^{\odot 2} + \xi\big)$ \Comment{Eq.~7}
\Statex \quad\, \textit{($\Omega_t = \{\Omega_{t,A}, \Omega_{t,B}\}$ for the two LoRA factors)}
\Statex \textbf{Stage 2: CKA Scoring \& Importance Accumulation}
\For{$i = 1$ to $n$}
    \State $h^{CP}_i \gets \mathrm{CKA}(Z^{TE}, Z^{SE}_i)$ \Comment{Eq.~9}
    \State $\Omega^{(i)}_{A,\text{total}} \gets \Omega^{(i)}_{A,\text{total}} + h^{CP}_i \, \Omega_{t,A}$; \,\, $\Omega^{(i)}_{B,\text{total}} \gets \Omega^{(i)}_{B,\text{total}} + h^{CP}_i \, \Omega_{t,B}$ \Comment{Eq.~13}
\EndFor
\Statex \textbf{Stage 3: Stable Update with CP-Bias Routing and Protection}
\For{each minibatch $x \in D_t$}
    \State $\tilde{s}_i \gets x^\top W_i + \alpha\, h^{CP}_i, \;\; \forall i$ \Comment{Eq.~10}
    \State $\mathcal{K}_t \gets \mathrm{Top\text{-}}k\{\tilde{s}_i\}$; \,\, ${G_i} = \mathrm{softmax}_{i\in\mathcal{K}_t}(\tilde{s}_i)$ \Comment{Eq.~11}
    \State $\mathcal{L}^{\,t}_{\text{total}} \gets \mathcal{L}^{\,t}_{\text{task}} + \lambda\, \mathcal{L}^{\,t}_{\text{reg}}\big(\{\Omega^{(i)}_{\text{total}}\}, \Phi, \{A_i^{\text{old}}, B_i^{\text{old}}\}\big) + \gamma\, \mathcal{L}^{\,t}_{\text{aux}}$ \Comment{Eqs.~14--15}
    \State $\Phi \gets \Phi - \eta_\Phi \nabla_\Phi \mathcal{L}^{\,t}_{\text{total}}$
\EndFor
\State Snapshot $\{A_i^{\text{old}}, B_i^{\text{old}}\} \gets \{A_i, B_i\}$ \Comment{for next task}
\end{algorithmic}
\end{algorithm}

\section{Limitations and Future Work}
\label{sec:limitations}

While CP-MoE demonstrates strong performance in mitigating catastrophic forgetting, we acknowledge a key limitation regarding the sampling scale ($N$) of the Transient Expert (TE). In our current implementation, we empirically fix the warm-up sampling size across all tasks to maintain computational efficiency. However, preliminary observations indicate that the optimal sampling scale is highly sensitive to the dataset's intrinsic complexity and modality. For instance, tasks with high variance in their semantic distribution may require a larger $N$ to accurately capture the geometric manifold, whereas simpler tasks could converge with significantly fewer tokens. In future work, we plan to design an \textit{adaptive sampling mechanism} that dynamically determines the optimal $N$ based on real-time gradient variance, further optimising the trade-off between representation accuracy and computational overhead.
\end{document}